\newcommand{\N}{\mathbb{N}}
\newcommand{\R}{\mathbb{R}}
\newcommand{\bV}{\mathbf{V}}
\newcommand{\bW}{\mathbf{W}}
\newcommand{\bX}{\mathbf{X}}
\newcommand{\bY}{\mathbf{Y}}
\newcommand{\rank}{\mathrm{rank}}
\newcommand{\dgm}{\mathrm{Dgm}}
\newcommand{\vrp}[1]{\mathrm{VR}(#1)}
\newcommand{\vr}[2]{\mathrm{VR}(#1;#2)}
\newcommand{\gh}{\mathrm{GH}}
\newcommand{\ph}{\mathrm{PH}}
\newcommand{\norm}[1]{\left\lVert#1\right\rVert}
\definecolor{darkgreen}{RGB}{0,100,0}
\newcommand\setrow[1]{\gdef\rowmac{#1}#1\ignorespaces}
\newcommand\clearrow{\global\let\rowmac\relax}
\newtheorem{theorem}{Theorem}[section]
\newtheorem{lemma}[theorem]{Lemma}
\theoremstyle{definition}
\newtheorem{definition}[theorem]{Definition}
\newtheorem{example}{Example}
\newtheorem{question}{Question}
\title[Capturing Dynamics of Time-Varying Data] 
      {Capturing Dynamics of Time-Varying\\ Data via Topology}
\author[Lu Xian and Henry Adams and Chad M. Topaz and Lori Ziegelmeier]{}
\subjclass{37N99, 55N31, 62R40, 92B99}
 \keywords{Topological data analysis, computational persistent homology, dynamics, mathematical models, machine learning}
 \email{xianl@umich.edu}
 \email{henry.adams@colostate.edu}
 \email{cmt6@williams.edu}
 \email{lziegel1@macalester.edu}
\thanks{
L.X.\ was funded by Macalester College through a grant to L.Z.
H.A.\ was supported by NSF grant 1934725, DELTA: Descriptors of Energy Landscapes by Topological Analysis.
C.M.T.\ was supported by NSF grant DMS-1813752, Variational and Topological Approaches to Complex Systems.
L.Z.\ was supported by NSF grant CDS\&E-MSS-1854703, Exact Homological Algebra for Computational Topology (ExHACT)}
\thanks{$^*$ Corresponding author: Lori Ziegelmeier}
\begin{document}
\maketitle

\centerline{\scshape Lu Xian}
\medskip
{\footnotesize
 \centerline{School of Information}
   \centerline{University of Michigan}
   \centerline{Ann Arbor, MI 48109, USA}
} 

\medskip

\centerline{\scshape Henry Adams}
\medskip
{\footnotesize
 \centerline{Department of Mathematics}
   \centerline{Colorado State University}
   \centerline{Fort Collins, CO 80523, USA}
}

\medskip

\centerline{\scshape Chad M.\ Topaz}
\medskip
{\footnotesize
 \centerline{Department of Mathematics and Statistics}
   \centerline{Williams College}
   \centerline{Williamstown, MA 01267, USA}
}

\medskip

\centerline{\scshape Lori Ziegelmeier$^*$}
\medskip
{\footnotesize
 \centerline{Department of Mathematics, Statistics, and Computer Science}
   \centerline{Macalester College}
   \centerline{Saint Paul, MN 55105, USA}
}

\bigskip

 \centerline{(Communicated by the associate editor name)}

\begin{abstract}
One approach to understanding complex data is to study its shape through the lens of algebraic topology.
While the early development of topological data analysis focused primarily on static data, in recent years, theoretical and applied studies have turned to data that varies in time.
A time-varying collection of metric spaces as formed, for example, by a moving school of fish or flock of birds, can contain a vast amount of information.
There is often a need to simplify or summarize the dynamic behavior.
We provide an introduction to topological summaries of time-varying metric spaces including vineyards~\cite{cohen2006vines}, crocker plots~\cite{topaz2015topological}, and multiparameter rank functions~\cite{kim2020spatiotemporal}.
We then introduce a new tool to summarize time-varying metric spaces: a \emph{crocker stack}.
Crocker stacks are convenient for visualization, amenable to machine learning, and satisfy a desirable continuity property which we prove.
We demonstrate the utility of crocker stacks for a parameter identification task involving an influential model of biological aggregations \cite{Vicsek1995}.
Altogether, we aim to bring the broader applied mathematics community up-to-date on topological summaries of time-varying metric spaces.
\end{abstract}


\section{Introduction}

Drawing from subfields within mathematics, applied mathematics, statistics, and computer science, topological data analysis (TDA) is a set of approaches that help one understand complex data by studying its shape.
The application of TDA has contributed to the understanding of problems and systems in the natural sciences, social sciences, and humanities, including granular materials~\cite{giusti2016topological}, cancer biology~\cite{damiano2018topological}, development economics~\cite{banman2018mind}, political science~\cite{feng2019persistent}, urban analytics~\cite{feng2020spatial}, natural language processing~\cite{zhu2013persistent}, and much more.
Classic works that build and review the fundamental ideas of TDA include~\cite{Zomorodian,kaczynski2006computational,Ghrist2008barcodes,EdelsbrunnerHarer,Carlsson2009,otter2017roadmap}.
While TDA was originally developed with the study of static data in mind, in recent years, it has found fruitful application to time-evolving data, or as we will say, \emph{dynamically-varying} or \emph{time-varying metric spaces}.
For example, for a biological aggregation such as an insect swarm, the metric space of interest might be the positions and velocities of all organisms, which vary from frame to frame in the movie of an experimental trial~\cite{ulmer2019topological}.
For networked oscillators, the metric space of interest might be the phase of each oscillator in the ensemble, which, similarly, evolves in time~\cite{stolz2017persistent}.
These systems can produce massive amounts of data, and so there is sometimes a need to simplify or summarize the dynamic behavior of time-varying systems.
Here, TDA plays a role.

In this paper, we have three overarching goals.
First, we aim to provide a lay reader in the data science community with an overview of topological tools for studying time-varying metric spaces.
Second, we demonstrate an application of some of these tools to a parameter recovery problem arising in the study of collective behavior.
Finally, we present a new tool for time-varying metric spaces --- a crocker stack --- and explore its continuity properties.
Overall, we hope that our work will provide a mechanism to bring newcomers to the field up-to-date on approaches to time-varying metric spaces, including our own new contribution.

Topological methods for studying time-varying data are built on a technique called persistent homology.
\emph{Homology} provides a way to (partially) characterize the topology of an object. The characterization comes in the form of quantities called \emph{Betti numbers} $\beta_k$, where the $k$ indicates a boundary of dimension $k$ enclosing a void of dimension $k+1$.
Concretely, the number of connected components is $\beta_0$, the number of topological loops (circles) is $\beta_1$, the number of trapped volumes is $\beta_2$, and so on up in dimension.
For example, suppose we have a filled-in disk next to a hollow square next to a two-torus, that is, a hollow donut; see Figure~\ref{fig:bettiexample}.
The filled-in disk has Betti numbers $(\beta_0,\beta_1,\beta_2,\beta_3,\ldots)=(1,0,0,0,\ldots)$ because it is one object that is contractible and has no higher dimensional structure.
The hollow square has Betti numbers $(1,1,0,0,\ldots)$ because it is one object enclosing a flat void.
And finally, the torus has Betti numbers $(1,2,1,0,\ldots)$ because it is one object, is generated by two independent circles (one passing around the equator of the donut and one passing around the donut's hole), and encloses an empty volume.
Altogether, then, we have $(\beta_0,\beta_1,\beta_2,\beta_3,\ldots)=(3,3,1,0,\ldots)$ if we consider the homology of the union of these three shapes.

\begin{figure}
\centering
\includegraphics[width=4in]{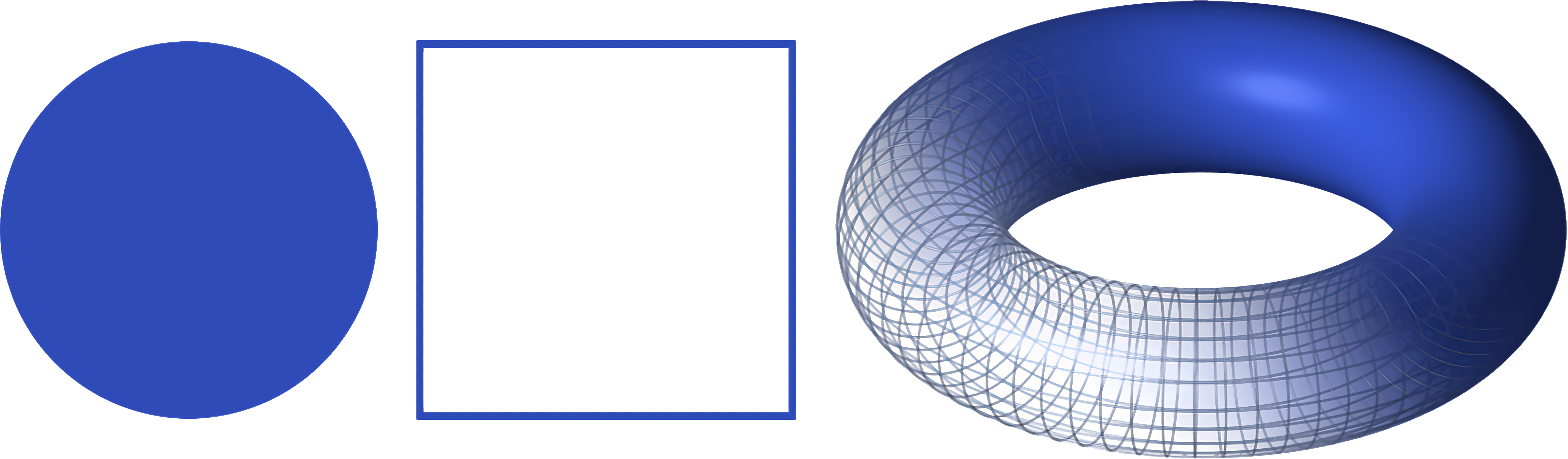}
\captionsetup{width=.9\linewidth}
\caption{
A filled-in disk (left) has Betti numbers $(\beta_0,\beta_1,\beta_2,\beta_3,\ldots)=(1,0,0,0,\ldots)$.
A hollow square (center) has Betti numbers $(1,1,0,0,\ldots)$.
A hollow two-torus (right) has Betti numbers $(1,2,1,0,\ldots)$.
If we consider the union of these three shapes, the Betti numbers are $(\beta_0,\beta_1,\beta_2,\beta_3,\ldots)=(3,3,1,0,\ldots)$. Image of the torus taken from Wikimedia Commons \mbox{\url{https://commons.wikimedia.org/wiki/File:Torus.svg}}, available for reuse under CCA BY-SA 3.0.}
\label{fig:bettiexample}
\end{figure}

In the example above, we have used idealized shapes such as a disk, hollow square, and torus.
However, we want to also think about the topological properties of a data set, rather than only idealized shapes.
Suppose we have $N$ data points in $\mathbb{R}^m$ and we want to know if this set of points has structure that we cannot see by eye.
We can build a \emph{simplicial complex} (in particular, we describe the process to construct a \emph{Vietoris-Rips} simplicial complex, but others exist as well) out of the data by placing an $m$-dimensional ball of radius $\varepsilon/2$ around each point, forming a $k$-simplex whenever $k+1$ points are pairwise within $\varepsilon$ (\emph{i.e.}\ the balls pairwise intersect), and then, calculating the Betti numbers of the object formed. 
Of course, the values of the Betti numbers will depend on our choice of $\varepsilon$.
For $\varepsilon$ approaching zero, all the balls will still be separated, and we have $\beta_0=N$ with no higher dimensional topological features.
For $\varepsilon$ approaching infinity, all the balls will overlap and we will have a giant, solid mass
with $\beta_0 = 1$ and no higher dimensional features.
At intermediate values of $\varepsilon$, the structure of the simplicial complex may well be sensitive to $\varepsilon$, and one may see topological holes of various dimensions that are born and die as $\varepsilon$ varies.
The \emph{persistent} part of persistent homology refers to calculating homology over a range of values of $\varepsilon$ and studying how topological features persist or vary.

Thus far, we have been discussing static data. Three topological summaries of time-varying data are vineyards~\cite{cohen2006vines}, crocker plots~\cite{topaz2015topological}, and multiparameter rank functions~\cite{kim2020spatiotemporal}.
As we will explain in more detail later, a \emph{vineyard} is a 3D representation of persistent homology over time.
A \emph{crocker plot} is a 2D image that displays the topological information at all scales and times simultaneously, albeit in a manner that does not necessarily elucidate persistence.
Finally, a multiparameter rank function is a computable invariant with appealing theoretical properties, providing lower bounds on strong notions of distance between dynamic metric spaces.
The three topological summaries we have mentioned here will be presented in more detail in Section~\ref{sec:related-work}.

We introduce a new topological summary called a \emph{crocker stack}, a 3D data structure that is an extension of the 2D crocker plot. The crocker stack has three important and useful features.
First, as we will prove in Section~\ref{sec:continuity}, it satisfies a continuity property.
Roughly, a small perturbation of time-varying data results in a small perturbation of the crocker stack.
Second, the crocker stack inherits appealing interpretability properties of the crocker plot.
A crocker stack is convenient for visualization purposes since topological information for all times and at all scales in the data set is displayed as a single 2D plot, and a third dimension captures the persistence of topological features; see Figure~\ref{fig:pseudo-crockerstack} for an example.
Third, a crocker stack can be discretized and converted into a feature vector, which can be used as input to machine learning tasks. We explain crocker stacks in detail in Section~\ref{sec:stacks}.

\begin{figure}[ht]
\centering
\includegraphics[width=4.6in]{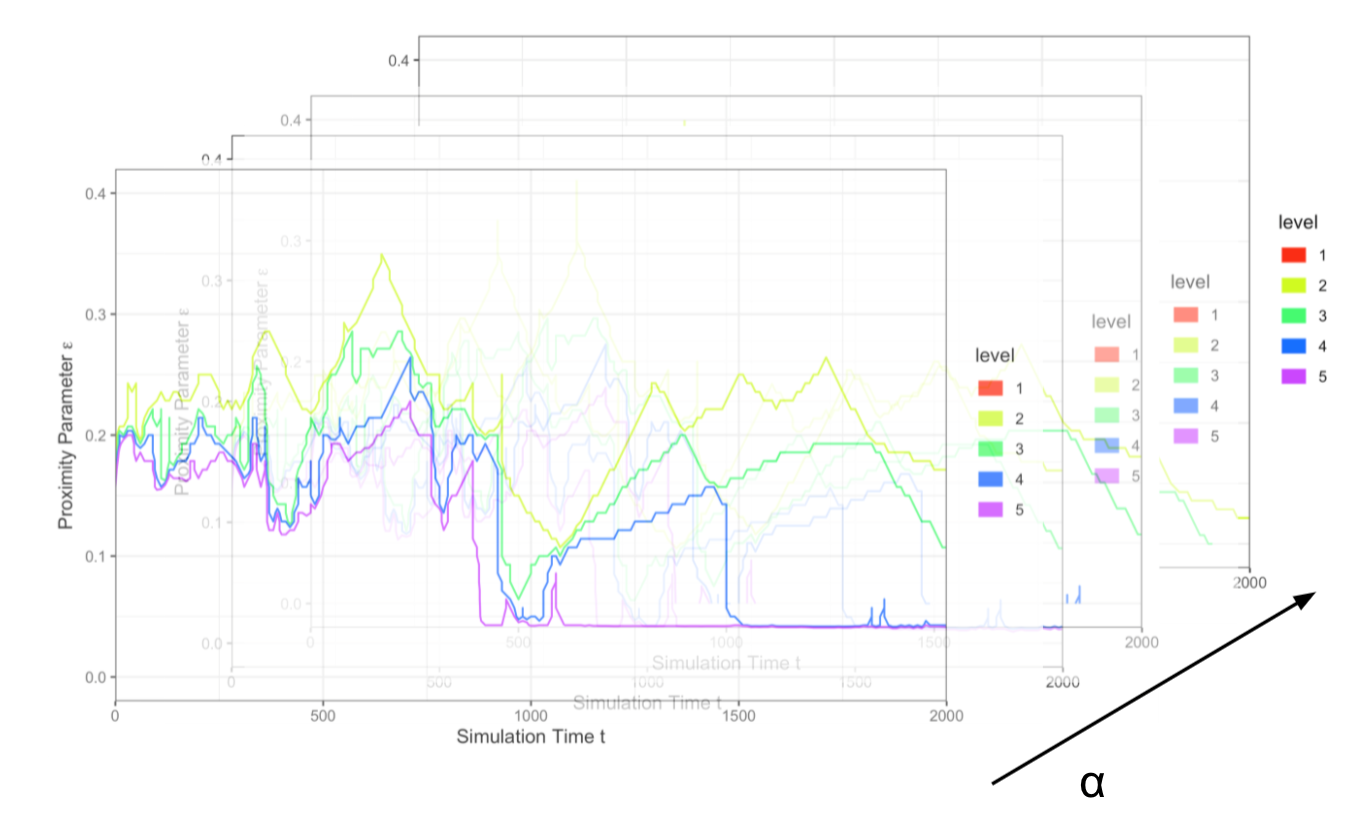}
\captionsetup{width=.9\linewidth}
\caption{An illustrative example of a crocker stack for $H_0$ computed from a simulation from the Viscek model with noise parameter $\eta=0.02$; see Section~\ref{sec:vicsek}.
The variable $\alpha$ is a smoothing parameter which captures the persistence of topological features.}
\label{fig:pseudo-crockerstack}
\end{figure}

The rest of this paper is organized as follows.
Section~\ref{sec:preliminaries} synthesizes information on persistence modules and metric spaces, thus providing the basic topological background underpinning this work in TDA.
Section~\ref{sec:related-work} reviews existing frameworks for studying time-varying metric spaces, namely vineyards, crocker plots, and multiparameter rank functions.
Section~\ref{sec:stacks} presents our new topological summary, crocker stacks.
In Section~\ref{sec:vicsek}, we use crocker plots and crocker stacks to study a seminal model of collective behavior: the Vicsek model~\cite{Vicsek1995}.
Specifically, we show that when used as inputs to machine learning algorithms for a parameter recovery task, crocker plots and stacks outperform more traditional summaries of dynamical behavior, drawn from physics.
Section~\ref{sec:distances} reviews notions of distances between metric spaces and persistence modules, which are necessary background for Section~\ref{sec:continuity}, in which we study the continuity properties of crocker stacks.
We conclude and describe possible future work in Section~\ref{sec:conclusion}.

\section{Preliminaries}\label{sec:preliminaries}

\subsection{Persistence modules}

The construction of persistent homology (as detailed in~\cite{Carlsson2009, EdelsbrunnerHarer, Ghrist2008barcodes}) begins with a \emph{filtration} $X(0) \subseteq X(1) \subseteq \ldots \subseteq X(n)$, a nested sequence of topological spaces.
These spaces are often simplicial complexes; we identify simplicial complexes with their geometric realizations.
As an example, a sequence of \emph{Vietoris--Rips complexes} parameterized by scale parameter $\varepsilon_i$ is a filtration.
Given a fixed point cloud $X$ (or, more generally, a metric space $X$) and an increasing sequence of parameter values $\varepsilon_i$ for $i\in\{0,1,\ldots, n\}$, denote a sequence of Vietoris--Rips complexes as $X(i):=\vr{X}{\varepsilon_i}$ as $i$ varies.
Here, the Vietoris--Rips complex $\vr{X}{\varepsilon_i}$ at scale $\varepsilon_i$ is the abstract simplicial complex with vertex set $X$, and with $k$-simplices corresponding to $k+1$ points in $X$ which are pairwise within distance $\varepsilon_i$.
For $\varepsilon_0\le\varepsilon_1\le\ldots\le\varepsilon_n$, we have inclusions
\[\vr{X}{\varepsilon_0} \hookrightarrow  \vr{X}{\varepsilon_1}  \hookrightarrow \ldots \hookrightarrow \vr{X}{\varepsilon_n}.\]
We apply $k$-dimensional homology $H_k$ (with coefficients in a field) to such a filtration.
This generates a vector space $H_k(X(i))$ for each space $X(i)$, whose dimension (or rank) is the $k$-th Betti number $\beta_k$.
Furthermore, the application of homology assigns, to each inclusion between topological spaces, a linear map between homology vector spaces~\cite{Zomorodian}.

The $k$-dimensional persistent homology of the filtration $X(0) \subseteq X(1) \subseteq \ldots \subseteq X(n)$ refers to the image of induced homomorphisms $H_k(X(i)) \xrightarrow{} H_k(X(j))$ for any $i\le j$.
The sequence $H_k(X(1)) \xrightarrow{} H_k(X(2)) \xrightarrow{} \ldots \xrightarrow{} H_k(X(n))$ is a \emph{persistence module} denoted by $V$, with $V(i)=H_k(X(i))$ for all $i$.
The persistence module for a fixed homological dimension $k$ is known as the \emph{$k$-dimensional persistent homology} (PH) of a filtration.
Persistent homology crucially relies on the fact that homology is a \emph{functor}, which means that an inclusion $X(i) \hookrightarrow X(j)$ indeed induces a map $H_k(X(i))\to H_k(X(j))$ on homology.
More generally, we refer to any sequence of vector spaces and linear maps $V(0)\to V(1)\to \ldots\to V(n)$ as a \emph{persistence module $V$}, whether or not the vector spaces arise from homology.

\subsection{Persistence diagrams and barcodes}

Persistence diagrams and barcodes each provide a way to display the evolution of topological features in a filtration.
In the former, a collection of points in the extended plane $\mathbb{R}^2$ is drawn.
If a homology class is born at $X(i)$ and dies at $X(j)$, we represent this homology class by a single point at the two coordinates $(i, j)$ in the \emph{persistence diagram}; see Figure~\ref{fig:barcode}~(Left).
Since we have $i \le j$ for all such points $(i, j)$, these points lie on or above the diagonal.
Points which remain throughout the entirety of the filtration are said to last to infinity.
(A technical point is that for later convenience when defining bottleneck distances in Definition~\ref{def:bottleneck}, the persistence diagram also includes each point along the diagonal, which can be interpreted as a feature that is born and dies simultaneously, with infinite multiplicity.)
We denote a persistence diagram of a persistence module $V$ as $\dgm_k(V)$ for each homology dimension $k$.

\begin{figure}[ht]
\centering
\includegraphics[width=4.6in]{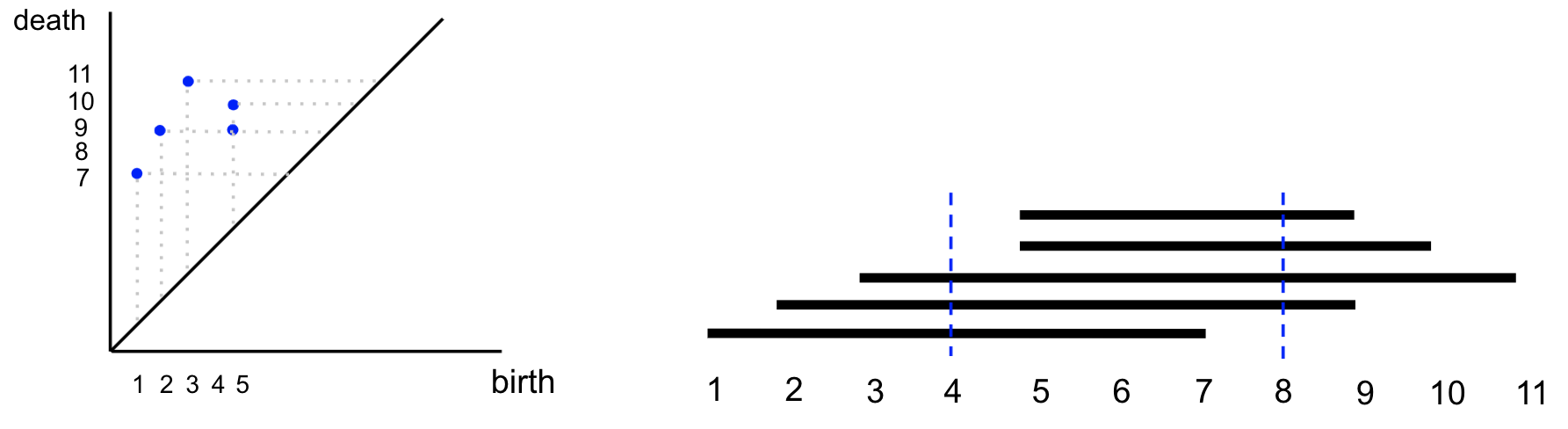}
\captionsetup{width=.9\linewidth}
\caption{
Persistence diagram (Left) and corresponding persistence barcode (Right).
Let the persistence barcode $V$ consist of the intervals $[1,7]$, $[2,9]$, $[3,11]$, $[5,10]$, $[5,9]$, and let $4 = i \le j = 8$.
Then $\rank(V(4) \to V(8))$ is two since there are two intervals in the persistence diagram that contain the interval $[i,j]=[4,8]$.
}
\label{fig:barcode}
\end{figure}

Similarly, in the \emph{barcode} representation, there is a distinct \emph{interval} (\emph{i.e.}\ a bar) corresponding to a homology class persisting over a range of scales; see Figure~\ref{fig:barcode}~(Left).
The interval begins at scale $i$, when the feature is born, and ends at scale $j$, when the feature dies.
The Betti number $\beta_k$ at scale $i$ is the number of distinct bars that intersect the vertical line through $i$.

\subsection{The rank invariant}\label{ssec:rankInvariant}

For $i \le j$, the \emph{rank} of the map $V(i) \to V(j)$, denoted $\rank(V(i) \to V(j))$, is the number of intervals in the persistence barcode that contain the interval $[i,j]$.
In other words, $\rank(V(i) \to V(j))$ is the number of features that are born before scale $i$ and die after scale $j$.
For example, suppose the persistence barcode $V$ consists of the intervals $[1,7]$, $[2,9]$, $[3,11]$, $[5,10]$, $[5,9]$, and let $4 = i \le j = 8$ (see Figure~\ref{fig:barcode}).
Then $\rank(V(4) \to V(8))$ is two since there are two intervals ($[2,9]$ and $[3,11]$) in the persistence diagram that contain the interval $[i,j]=[4,8]$.
The function $(i,j) \mapsto \mathrm{rank}(V(i) \to V(j)) \in \mathbb{N}$, for all choices of $i\le j$, is called the \emph{rank invariant}.

\subsection{The bottleneck distance}
\label{ssec:bottleneck}
Let $\dgm_k(V)$ and $\dgm_k(W)$ be two persistence diagrams associated with two persistence modules $V$ and $W$.
The distance between two points $x = (x_1, x_2)$ in $\dgm_k(V)$ and $y = (y_1, y_2)$ in $\dgm_k(W)$ is given by the $L_\infty$ distance  $\norm{x-y}_{\infty} = \max\{| x_1 - y_1|, |x_2 - y_2| \}$.

\begin{definition}
\label{def:bottleneck}
The \emph{bottleneck distance} between the two persistence diagrams $V$ and $W$ is computed by taking the supremum of the $L_{\infty}$ distance between matched points and then taking the infimum over all bijections $h\colon \dgm_k(V) \rightarrow \dgm_k(W)$~\cite{EdelsbrunnerHarer}:
\[d_b(\dgm_k(V), \dgm_k(W)) = \inf_{h} \sup_{v \in \dgm_k(V)} \norm{v - h(v)}_\infty.\]
\end{definition}
Note that such a bijection $h$ always exists because we have defined a persistence diagram to contain each point on the diagonal with infinite multiplicity.

Later in the paper, we use the notation $d_b(\ph(\vrp{X}),\ph(\vrp{Y}))$ to explicitly make clear that the persistence diagrams we refer to are the Vietoris--Rips complexes corresponding to point clouds $X$ and $Y$.
The persistent homology depends on the chosen homological dimension $k$, but we make this dependency implicit and suppress $k$ from the notation as the statements we consider are often identical for any integer $k\ge 0$.

See Figure~\ref{fig:bottleneck} for an illustration of the bottleneck distance where the red points correspond to one persistence module and the blue correspond to another.
The bottleneck distance considers all matchings between the red and blue points---where unmatched points can be paired with points on the diagonal---and finds the matching that minimizes the largest distance between any two matched points.

\begin{figure}[ht]
\centering
\includegraphics[width=1.6in]{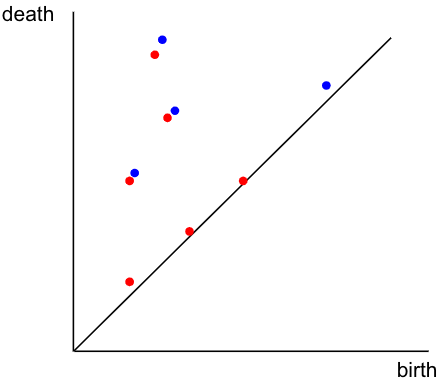}
\captionsetup{width=.9\linewidth}
\caption{Red points represent points in persistence diagram A.
Blue points represent points in persistence diagram B.
The bottleneck distance between persistence diagrams A and B is computed by first taking the $L_\infty$ distance between matched red and blue points of a bijection between A and B, and then taking the infimum over all bijections.}
\label{fig:bottleneck}
\end{figure}

\section{Related work}\label{sec:related-work}

We now survey related work on vineyards, crocker plots, and other topological and machine learning techniques for studying time-varying metric spaces.

\subsection{Vineyards}\label{ssec:vineyards}

One way to construct a topological summary of a time-varying collection of metric spaces is a \emph{vineyard}.
This summary represents a metric space that is varying over time $t\in [0,T]$ as a stacked set of persistence diagrams as time varies, with time-varying curves drawn through the persistence diagram points~\cite{cohen2006vines}.
A stacked set of persistence diagrams can be thought of as a video for visualization purposes, where each frame corresponds to a 2D persistence diagram, and the persistence diagram points evolve over the time interval $t\in [0,T]$.
See for example Figure~\ref{fig:vineyard}.

\begin{figure}[ht]
\centering
\includegraphics[width=2in]{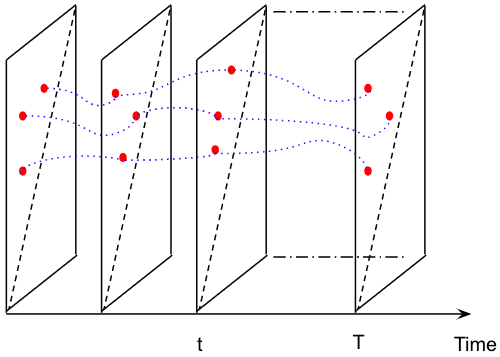}
\captionsetup{width=.9\linewidth}
\caption{Vines and vineyard.
Each red point represents a point on a persistence diagram.
Each blue curve is a vine traced out by a persistent point on time-varying persistence diagrams.
The horizontal direction denotes time.}
\label{fig:vineyard}
\end{figure}

A vineyard contains a set of curves in 3D, and each curve corresponds to a point in a time-varying persistence diagram in the 3D space $ [0,T] \times \R^2$.
That is to say, the three coordinates are the time coordinate $t$, the birth scale of a topological feature, and the death scale of a topological feature.
One of the nice properties of persistent homology is that it is \emph{stable}, which means that small perturbations in a time-varying metric space lead to small changes in the persistence diagram plots at each point in time.
Thus, when vineyards are considered only as a stacked set of persistence diagrams, they are stable.
However, the individual vines are not stable: if two vines move near to each other but then pull apart without touching, so that their persistence diagram points return to their original locations, then after a small perturbation these two vines may instead cross and have their corresponding persistence diagram points switch locations~\cite{Morozov}.
As such, throughout the paper, we will refer to a vineyard as a ``stacked set of persistence diagrams" or a ``vineyard," when the distinction between the two notions matters.
More formally, for us a stacked set of persistence diagrams is a continuous map from the interval $[0,T]$ to the space of persistence diagrams equipped with the bottleneck distance.
A vineyard furthermore contains the information of time-varying curves of the persistence diagram points.

The paper~\cite{cohen2006vines} describes the theory of computing vineyards, which are implemented in, for example, the Dionysus software package~\cite{dionysus}.
More specifically, we can compute the initial persistence diagram at time $t=0$, which has a sub-cubic running time in the number of simplices~\cite{milosavljevic2011zigzag}, and then update the vineyard as $t$ increases in a linear way (linear in the number of simplices whose orderings in the filtration get transposed).

\subsection{Crocker plots}

Given the same input as a vineyard, \emph{i.e.}\ a metric space that is varying over time $t \in [0,T]$, the crocker plot gives a topological summary that is an integer-valued function on $\R^2$, where the first input is time $t$, the second input is the scale $\varepsilon$, and the value of the function is the $k$-dimensional Betti number of the corresponding Vietoris--Rips complex~\cite{topaz2015topological}.
This function of two variables ($t$ and $\varepsilon$) can be discretized as a matrix and viewed as a contour diagram; see Figure~\ref{fig:crockerPlot} in Section~\ref{ssec:CrockerPlot} as an example.
The crocker plot is sometimes better suited for applications than a vineyard, since most scientific images are in 2D as opposed to 3D.
It can also more easily be used as a feature vector for machine learning tasks.
One drawback of a crocker plot is that it is not stable --- perturbing the dynamic metric space only slightly could produce changes  of unbounded size in the crocker plot (see Example~\ref{ex:crocker-plot-not-naively-stable} in Section~\ref{ssec:discont}).

One can think of the crocker plot as a collapsed or projected version of the vineyard: it is collapsed in the sense that it is lower dimensional (2D instead of 3D), and also in the sense that from a vineyard you can produce the corresponding crocker plot but not vice-versa.

\subsection{Time-varying metric spaces}

We now review works which analyze the topological structure of time-varying metric spaces.

In the proof-of-concept paper~\cite{topaz2015topological}, Topaz \emph{et al.}\ develop the crocker plot and apply it to four realizations of numerical simulations arising from the influential biological aggregation models of Vicsek~\cite{Vicsek1995} and D'Orsogna~\cite{D'Orsogna2006}.
Traditionally, order parameters derived from physics, such as polarization of group motion, angular momentum, etc., are calculated to assess structural differences in simulations.
The authors compare these order parameters to the topological crocker plot approach and discover that the latter reveals dynamic changes of these time-varying systems not captured by the former.

Ulmer \emph{et al.}~\cite{ulmer2019topological} use crocker plots to analyze the fit of two mathematical, random walk models developed by~\cite{NilPaiWar2013} to experimental data of pea aphid movement.
One model incorporates social interaction of the aphids, which is thought to be of importance for pea aphid movement, while the other is a control model.
The authors compare time-varying data from the models to time-varying data from the experiments using statistical tests on three metrics (order parameters commonly used in collective motion studies, order parameters that use \emph{a priori} input knowledge of the models, and the topological crocker plots).
The topological approach performs as well as the order parameters that require prior knowledge of the models and better than the ones that do not require prior knowledge, indicating that the topological approach may be useful to adopt when one has little information about underlying model mechanics.

Bhaskar \emph{et al.}~\cite{bhaskar2019analyzing} use crocker plots coupled with machine learning for parameter recovery in the model of D'Orsogna~\cite{D'Orsogna2006}.
The authors generate a large corpus of simulations with varying parameters, which result in different phenotypic patterns of the simulation.
For instance, over time, the particles may exhibit the structure of a single or double mill, collectively swarm together, or escape.
Each simulation is then transformed into a feature vector that summarizes the dynamics: either the aforementioned time series of order parameters (polarization, angular momentum, absolute angular momentum, average distance to nearest neighbors, or the concatenation of all four) or vectorized crocker plots corresponding to 0-dimensional homology $H_0$, 1-dimensional homology $H_1$, or the concatenation of the two.
The feature vectors are then fed into both supervised and unsupervised machine learning algorithms in order to deduce phenotypic patterns or underlying parameters.
In all cases, the topological approach gives far more accurate results than the order parameters, even with no underlying knowledge of the dynamics.

While these papers highlight that the crocker representation can be effective at modeling time-varying data and is amenable as a feature vector for machine learning tasks, crocker plots do not consider the persistence of topological features.
At each time step, a crocker plot summarizes only Betti numbers at each scale independently.
For example, while a $H_1$ crocker plot contains the topological information about the number of topological 1-dimensional holes at each scale, it does not encode the information of the scales when holes first appear and subsequently disappear.
As a result, crocker plots cannot show if the 1-dimensional holes at two different scales are in fact the same features or different.
Also as mentioned above, crocker plots are not stable.
In other words, small perturbations in dynamic metric spaces can produce changes of unbounded size in crocker plots, exemplified by Example~\ref{ex:crocker-plot-not-naively-stable}.
We address this problem and develop a persistent version of crocker plots, a \emph{crocker stack}, introduced in Section~\ref{sec:stacks}, which can capture persistent structural features of time-varying systems.

In contrast to the crocker plot approach, Corcoran \emph{et al.}~\cite{corcoran2017modelling} model swarm behavior of fish by computing persistent topological features using \emph{zigzag persistent homology}~\cite{ZigzagPersistence,carlsson2009zigzag}.
Briefly, a zigzag persistence module connects topological spaces via inclusion maps using either forward or backward arrows and tracks topological features through these inclusions.
The authors use inclusions from two consecutive time steps to the union of these time steps to track the evolution of features through time.
The resulting persistence diagram is transformed to a \emph{persistence landscape}~\cite{Bubenik2015}, which is a stable functional representation related to the rank invariant.
The persistence landscape exists in a normed vector space, and as such, the authors of~\cite{corcoran2017modelling} use statistical tests to cluster swarm behavior of fish into frequently occurring behaviors named flock, torus, and disordered.
However, while this method is persistent in time, to compute zigzag persistent homology for a fixed scale parameter requires \emph{a priori} knowledge of the underlying data to choose an appropriate scale.

In a more theoretical exploration, Kim and M{\'e}moli~\cite{kim2017stable} develop persistent homology summaries of time-varying data by encoding a finite dynamic metric space as a zigzag persistence module.
They prove that the resulting persistence diagram is stable under perturbations of the input dynamic graph in relation to defined distances on the dynamic graphs.
In a related research direction, these authors also consider an invariant for dynamic metric spaces in~\cite{kim2020spatiotemporal}.
They introduce a spatiotemporal filtration which can measure subtle differences between pairs of dynamic metric spaces.
By producing a 3D persistence module where one of the dimensions is not the real line but a poset, the invariant obtains higher differentiability power than a vineyard representation.
Intuitively,~\cite{kim2020spatiotemporal} considers smoothings in both time and space.
By contrast, crocker stacks focus on smoothings in space alone, with the goal of obtaining vectorizable summaries as input for machine learning tasks.

To a lesser degree, our techniques are also related to multiparameter persistence~\cite{carlsson2009computing,carlsson2009theory,cerri2013betti,chacholski2017combinatorial,lesnick2015theory,miller2017data,scolamiero2017multidimensional} as we compare how time-varying metric spaces evolve in both the parameters of time and scale.
However, unlike the scale parameter, there is not inclusion from one time step to the next, and as such, there is not an increasing filtration in time.
See also~\cite{mccleary2020edit} for a functorial model of time, discretized using cellular cosheaves.
The papers~\cite{EvasionPaths,adams2021efficient,carlsson2019parametrized,Coordinate-free,de2007coverage} consider time-varying notions of topology applied to coverage problems in mobile sensor networks.

\section{Crocker stacks}\label{sec:stacks}

We now describe the crocker plot summary of a time-varying metric space, and its extension to a crocker stack.
We give a precise definition of time-varying metric spaces and time-varying persistence modules suitable for our context, describe crocker plots and $\alpha$-smoothed crocker plots, and finally introduce crocker stacks.

\subsection{Time-varying metric spaces and persistence modules}

We use bold letters to denote time-varying objects, and non-bold letters to represent objects at a single point in time.

A \emph{time-varying metic space} $\bX=\{X_t\}_{t \in [0,T]}$ is a map $t\mapsto X_t$ from the interval $[0,T] \subseteq \R$ to the collection of all compact metric spaces.
Here $X_t$ is a single metric space at the fixed point in time $t$.
We say $\bX$ is continuous if this map $t\mapsto X_t$ is continuous with respect to the Gromov--Hausdorff distance (Section~\ref{ssec:gh}), and we say $\bX$ is finite if there is some integer $N$ such that the size of metric space $X_t$ is at most $N$ for all $t\in [0,T]$.

For example, if $Z$ is a fixed metric space (perhaps Euclidean space $Z=\R^n$), and if $x_1,\ldots,x_N\colon [0,T]\to Z$ are a collection of $N$ continuous maps into $Z$, then we can form a time-varying metric space $\bX=\{X_t\}_{t \in [0,T]}$ by letting $X_t=\{x_1(t),\ldots,x_N(t)\}$.
We note that $\bX$ is both continuous and finite.
Many of the time-varying metric spaces that we consider when studying agent-based collective motion models are constructed in this way.

Similarly, a \emph{time-varying persistence module} $\bV=\{V_t\}_{t \in [0,T]}$ is a map $t\mapsto V_t$ from the interval $[0,T] \subseteq \R$ to the collection of all persistence modules, equipped with the bottleneck distance.
Here $V_t$ is a single persistence module at the fixed point in time $t$.
We say that $\bV$ is continous if this map $t\mapsto V_t$ is continuous.

If $\bX$ is a time-varying metric space, then we can form a time-varying persistence module $\bV=\ph(\vrp{\bX})$ defined for each $t\in[0,T]$ by $V_t=\ph(\vrp{X_t})$.
Here we have fixed the homological dimension $k\ge 0$ and supressed it from the notation.
It follows from the stability of persistent homology (Section~\ref{ssec:stability-ph}) that if $\bX$ is continuous, then so is $\bV$.

We remark that a vineyard contains more information than a time-varying persistence module.
Indeed, a vineyard additionally contains a matching between the points in the persistence diagram $V_t$ and $V_{t+\varepsilon}$ for $\varepsilon>0$ sufficiently small.

\subsection{Crocker plots}
Crocker plots were originally defined on time-varying metric spaces $\bX$, after first applying Vietoris--Rips complexes and then homology to get the time-varying persistence module $\bV=\{V_t\}_{t\in [0,T]}$, where $V_t=H_k(\vr{X_t}{\varepsilon})$~\cite{topaz2015topological}.
Here, we take the more general approach and define a crocker plot for any time-varying persistence module $\bV$, regardless of its origins.
In fact, we note a crocker plot can be formed for any two parameter family of topological spaces, such as a bifiltration, not merely one varying in time. 

Let $\bV$ be a time-varying persistence module, with $V_t$ the persistence module at time $t$.
In the 2D \emph{crocker plot} of homological dimension $k$, the value at time $t$ and scale parameter $\varepsilon$ is the rank (or dimension) of the vector space $V_t(\varepsilon)$.
This function of two variables can be viewed as a contour plot, as shown in Figure~\ref{fig:crockerPlot}, which is suitable for applications as all times are displayed simultaneously.
For discretized values of $\varepsilon$ and $t$ (as for computational purposes), the ranks of $V_t(\varepsilon)$ can be represented as a matrix.
This matrix can then be vectorized and used as a feature vector for machine learning algorithms.

If $V_t$ is the $k$-dimensional homology of the Vietoris--Rips complex of a metric space $X_t$, then taking this time-varying metric space as input, we obtain a crocker plot, which again returns a topological summary that is an integer-valued function on $\R^2$. 
The rank at time $t \in [0,T]$ and scale $\varepsilon$ is then the number of ``$k$-dimensional holes at scale $\varepsilon$", also known as the Betti number $\beta_k$.
For a fixed $t$ and varying $\varepsilon$, this is equivalent to the notion of a \emph{Betti curve}.

\subsection{$\alpha$-smoothed crocker plots}

An extension of a crocker plot is an \emph{$\alpha$-smoothed} crocker plot.
When applied to a time-varying persistence module $\bV=\{V_t\}_{t\in [0,T]}$, the output of an $\alpha$-smoothed crocker plot for $\alpha \ge 0$ is the rank of the map $V_t(\varepsilon-\alpha) \to V_t(\varepsilon+\alpha)$ at time $t$ and scale $\varepsilon$.
A standard crocker plot can also be called a \emph{0-smoothed} crocker plot.
The effect of $\alpha$-smoothing is shown in Figure~\ref{fig:smoothing}.
Note that $\alpha$-smoothing can potentially reduce noise in a crocker plot.

\begin{figure}[ht]
\centering
\includegraphics[width=4in]{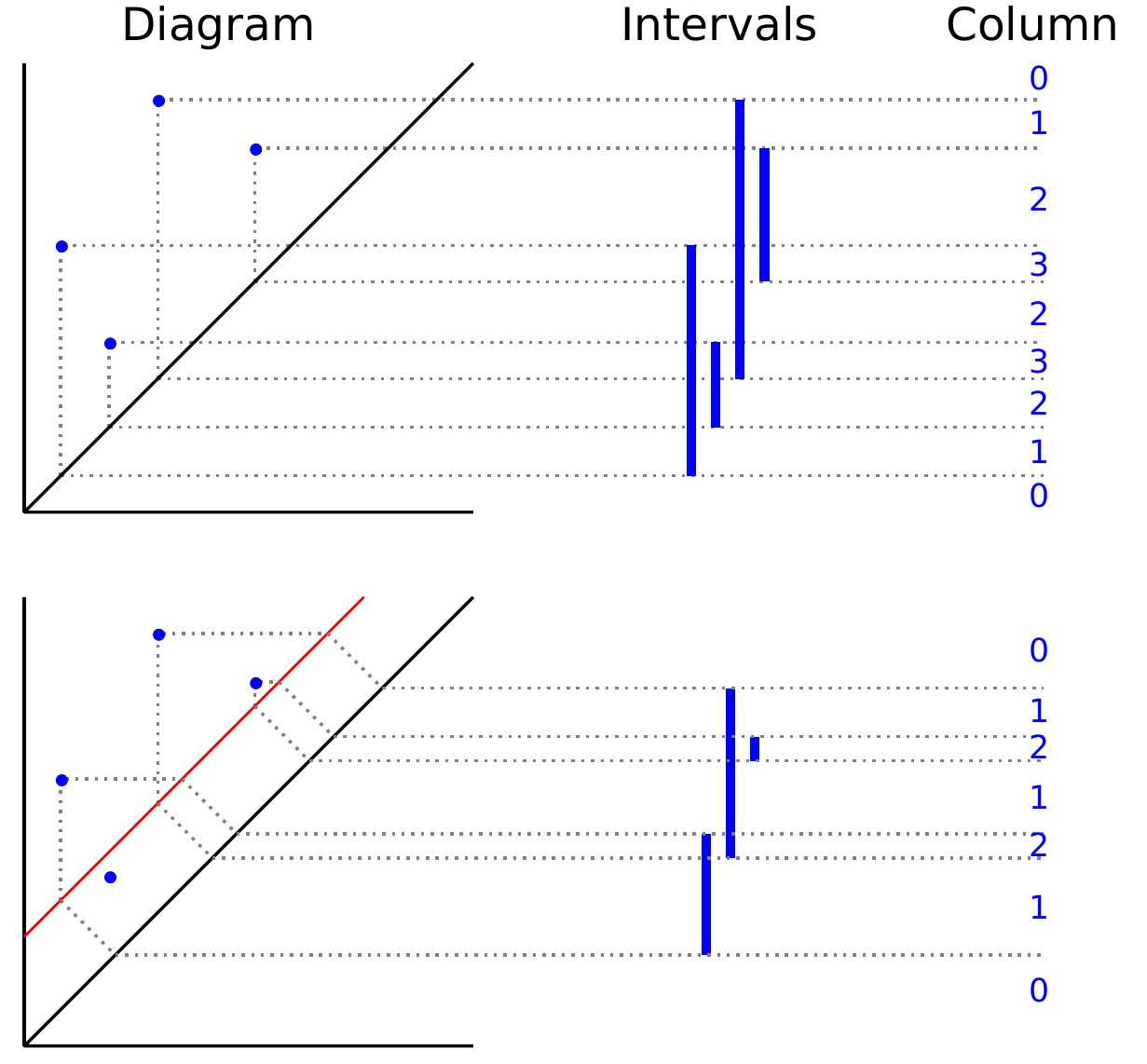}
\captionsetup{width=.9\linewidth}
\caption{The effect of $\alpha$-smoothing.
(Top) A persistence diagram, the corresponding persistence intervals (drawn vertically), and one column of a crocker plot matrix.
If we had points moving in time, then we would get a time-varying persistence diagram, a time-varying persistence barcode, and a complete crocker plot matrix (swept out from left to right as time increases).
(Bottom) A persistence diagram with the red line reflecting the choice of $\alpha$-smoothing, along with the corresponding $\alpha$-smoothed persistence intervals, and one column of an $\alpha$-smoothed crocker plot matrix.
The $y$-intercept of the diagonal red line is $2\alpha$.
All persistence diagram points under the red line are ignored under $\alpha$-smoothing.
}
\label{fig:smoothing}
\end{figure}

\subsection{The crocker stack for time-varying persistence diagrams}\label{sec:vid}

The \emph{crocker stack} is a sequence, thought of as a video, of $\alpha$-smoothed crocker plots, in which each $\alpha$-smoothed crocker plot is a frame of the video for continuously increasing $\alpha$, starting at $\alpha=0$.

\begin{definition}
\label{def:crocker-stack}
A \emph{crocker stack} summarizes the topological information of a time-varying persistence module  $\bV$ in a function $f_\bV\colon [0,T]\times[0,\infty)\times[0,\infty)\to\N$, where
\[f_\bV(t,\varepsilon,\alpha)
=\rank(V_t(\varepsilon-\alpha)\to V_t(\varepsilon+\alpha)).\]
\end{definition}

In the 3D domain $[0,T]\times[0,\infty)\times[0,\infty)$, the horizontal axis displays the time $t\in[0,T]$, and the vertical axis displays the scale $\varepsilon\in[0,\infty)$.
The persistence parameter $\alpha\in[0,\infty)$ indicates the order of the video frames.
In other words, the crocker stack is a sequence of $\alpha$-smoothed crocker plots that vary over $\alpha\ge 0$.
As examples, see Figures~\ref{fig:pseudo-crockerstack} and~\ref{fig:pseudo-crockerstackpanel}.

The crocker stack has the property that $f_\bV(t,\varepsilon,\alpha) \le f_\bV(t,\varepsilon,\alpha')$ for $\alpha \ge \alpha'$: larger $\alpha$ values require features to persist longer, which means that the crocker stack is a non-increasing function of $\alpha$.
Viewing a crocker stack as $\alpha$ increases could help one identify interesting smoothing parameter choices $\alpha$ to consider.

In Section~\ref{sec:rank}, we describe how the crocker stack and the stacked set of persistence diagrams are equivalent to one another, in the sense that either one contains the information needed to reconstruct the other.
However, crocker stacks can sometimes display the equivalent information in a more useful format.
This is because in a crocker stack, all times $t \in [0,T]$ are represented in each frame $\alpha$, whereas in a time-varying persistence diagram during a single frame $t$ one only ever sees information about that time.
We also explain in Section~\ref{sec:continuity} how crocker stacks are continuous, though, unfortunately, not in a way that would be the most immediately pertinent for machine learning applications.

\section{Experiments with the Vicsek model}
\label{sec:vicsek}

\subsection{Background}

We now turn to evaluating the utility of $\alpha$-smoothed crocker plots for applications.
This assessment centers on a parameter identification task for a mathematical model developed by Vicsek and collaborators~\cite{Vicsek1995}.
\emph{Parameter identification} refers to deducing the parameters of a model from experimental or simulation data.
The \emph{Vicsek model} is a seminal model for collective motion, in which agents attempt to align their motion with that of nearby neighbors, subject to a bit of random noise.
Concretely, the question we ask is ``given time series data output from the Vicsek model, can we recover the model parameters responsible for simulating that data?''

A key quantity in the Vicsek model is a noise parameter, $\eta$, which measures the degree of randomness in an agent's chosen direction of motion.
We generate 100 simulations for each of 15 different values of $\eta$, for a total of 1500 simulations.
We then create four different experiments, each using simulation data from a chosen subset of $\eta$ values.

An experiment consists of the following procedure.
For all of the simulations admitted to the experiment, we compute time series of feature vectors that summarize the simulation data: an order parameter from the physics literature that measures alignment of agents, $\alpha$-smoothed crocker plots, a (discretized) crocker stack, and a stacked set of persistence diagrams (in one experiment).
As a reminder, the terminology ``stacked set of persistence diagrams" is used in place of a vineyard when we do not consider the vines or curves traced out by the persistence points.
For the first three feature vectors, we compute pairwise distances between simulations using a Euclidean norm, and for the stacked set of persistence diagrams, we compute (the computationally expensive) supremum bottleneck distance.
These pairwise distances are the inputs to a machine learning algorithm, $K$-medoids clustering.
We then assess the success of clustering under different choices for the distance metric by examining how many simulations were clustered to a medoid with the same $\eta$ value.

For the remainder of this section, we will present the Vicsek model, describe the design of our numerical experiments, and present the results of using various metrics for parameter identification methods.
These results allow us to compare the efficacy of topological approaches to that of more traditional ones.

\subsection{Research design}

\subsubsection{Vicsek model} 
Cited thousands of times in the scientific literature, the Vicsek model is a seminal model for collective motion~\cite{Vicsek1995}.
This discrete time, agent-based model tracks the positions $\vec{x}_i$ and headings $\theta_i\in[0,2\pi)$ of $n$ agents in a square region with periodic boundary conditions.
The agents are seeded with uniformly random positions and uniformly random initial headings.
At each time step, an agent updates its heading and its position.
The new heading is taken to be the average heading of nearby neighbors, added to a small amount of random noise drawn from the uniform distribution $(-\eta/2,\eta/2)$.
More explicitly, the model is:
\[\theta_i(t+\Delta t) = \frac{1}{N} \left( \sum_{| \mathbf{x}_i - \mathbf{x}_j| \leq R}^{} \theta_j(t) \right) + U(-\tfrac{\eta}{2},\tfrac{\eta}{2}).\]
Above, $R$ is the distance threshold under which nearby neighbors interact, $N$ is the number of neighbors within distance $R$, and $U$ is the uniform distribution.
See Figure~\ref{fig:Vicsek} for an illustration of this model.
\begin{figure}[ht]
\centering
\includegraphics[width=2.2in]{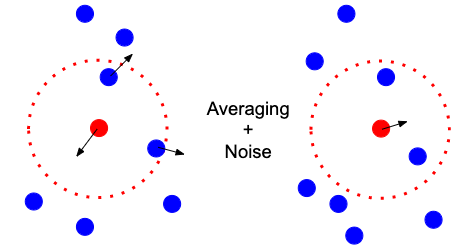}
\captionsetup{width=.9\linewidth}
\caption{To update the heading of an agent (in red) according to the Vicsek model, we first find the nearby neighbors of an agent within a radius $R$ (denoted by the red circle) and then take the average of its neighbors' headings, plus some noise.}
\label{fig:Vicsek}
\end{figure}
The Vicsek model updates $\theta_i$ and $\vec{x}_i$ according to an Euler's method type of update, that is, 
\[\mathbf{v}_i(t + \Delta t) = v_0 (\cos \theta_i (t+\Delta t), \sin \theta_i (t+\Delta t))\]
\[\mathbf{x}_i(t+\Delta t) = \mathbf{x}_i(t) + \mathbf{v}_i(t+\Delta t)\Delta t.\]
Note that all particles move with the same constant speed $v_0$.

\subsubsection{Parameters and simulations}\label{ssec:parameters}

The parameters in the Vicsek model are the number of agents $n$, the radius of alignment interaction $R$, the level of noise $\eta$, the length of a side of the periodic domain $\ell$, the agent speed $v_0$, and the time step $\Delta t$.
In the literature on this model, $\Delta t$ and $R$ are typically set to unity without loss of generality since one may rescale time and space.
This leaves the parameters $n$, $\eta$, $\ell$, and $v_0$.
Due to the periodic domain and finite sensing radius of the agents, these collapse effectively into three parameters, namely $v_0$, $\eta$, and an agent density $\rho := n/\ell^2$.

For each simulation, we fix length $\ell=25$ and $n=300$ agents, which gives that $\rho = 0.48$.
We also take speed $v_0 = 0.03$.
The remaining noise parameter $\eta$ is the value that we hope to predict from the output of the simulation by using machine learning.
We generate 100 simulations for each of 15 $\eta$ values:
\[\eta\in\{ 0.01, 0.02, 0.03, 0.05, 0.1, 0.19, 0.2, 0.21, 0.3, 0.5, 1, 1.5, 1.9, 1.99, 2\}.\]
We compare four methods of predicting $\eta$: order parameters, as described in Section~\ref{ssec:OP}, crocker plots and stacks, which were introduced in Section~\ref{ssec:CrockerPlot}, and in the case of one experiment, stacked sets of persistence diagrams, as described in Section~\ref{ssec:vineyards}.

Simulation data sets generated from the Vicsek model with different noise parameters $\eta$ include four variables: time $t$, position coordinates $x$ and $y$, and heading $\theta$.
We then transform the heading $\theta$ to velocity $\textbf{v} = (v_x, v_y)$.
This velocity can be used for computing the alignment order parameter for each simulation at each time step $t$, defined in Section~\ref{ssec:OP}.

To compute the persistent homology of simulations for crocker plots, discussed in more detail in Section~\ref{ssec:CrockerPlot}, we use both the 2-dimensional position data and the heading of each agent at each time step.
The $x$ and $y$ positions are scaled by $1/\ell=1/25$, the length of the box, and the heading is scaled by $1/2\pi$, in order for all data to be in the range from 0 to 1.
  
In Section~\ref{ss:experiments}, we will design four machine learning experiments by including different combinations of the 15 $\eta$ noise values.
We choose time steps 1, 10, and 40 for different experiments and will show the effect of the size of time steps on clustering accuracy.

Vicsek's original work~\cite{Vicsek1995} identifies several different possible qualitative behaviors of the system.
To quote him directly, ``For small densities and noise, the particles tend to form groups moving coherently in random directions\ldots
at higher densities and noise, the particles move randomly with some correlation.
For higher density and small noise, the motion becomes ordered.''

\subsubsection{Alignment order parameter} \label{ssec:OP}

Order parameters are a common way to measure the level of global synchrony in an agent-based model over time.
The alignment order parameter
\[
\varphi(t) = \frac{1}{n v_0} \left\| \sum_{i = 1}^{n} \mathbf{v}_i(t) \right\|
\]
is defined as the normalized average of the velocity vectors at time $t$, where $n$ is the number of particles in the model, as above.
This creates a time series recording the degree to which particles are aligned at each time, with 1 indicating a high degree of alignment and 0 indicating no alignment.
Calculating the order parameter is a conventional approach derived from physics.
We want to compare this approach to topological approaches on the task of parameter identification, by clustering simulations of biological aggregations with the Vicsek model into corresponding groups based on the noise parameter.

Figure~\ref{fig:orderParameter} displays the alignment order parameters for three simulations with different noise parameters $\eta=0.01$, $1$, $2$, and their changes over time.
At time zero, all three values of the order parameter are quite low due to the random initialization of headings.
All three simulations exhibit some alignment over time, with the smaller noise parameters reflecting a higher level of synchrony, consistent with Vicsek's description above.

\begin{figure}[ht]
\centering
\includegraphics[width=3.5in]{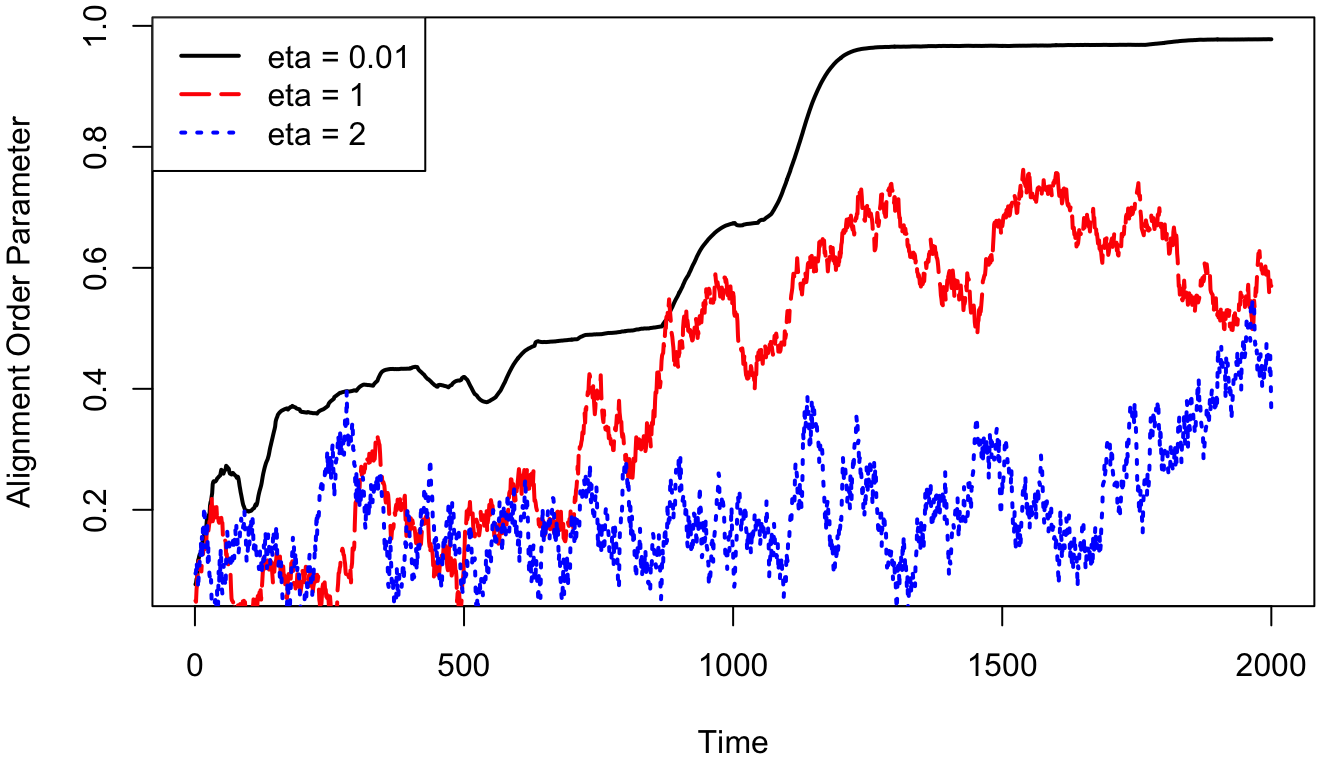}
\captionsetup{width=.9\linewidth}
\caption{A plot of order parameters for three simulations of the Vicsek model with different noise parameters $\eta$.
For smaller values of $\eta$, particles become more aligned, \emph{i.e.}\ move in the same direction, over time.}
\label{fig:orderParameter}
\end{figure}

\subsubsection{Crockers}\label{ssec:CrockerPlot}

Crocker plots, $\alpha$-smoothed crocker plots, and crocker stacks (collectively referred to as crockers) are three inputs that we will test in our machine learning experiments. 
We compute persistent homology using the R software package ``TDA''~\cite{R-TDA}, subsampling to every 10 time steps to speed up computations.
At each subsampled time, we compute the persistent homology of Vietoris--Rips filtered simplicial complexes, with vertex set given by the scaled location and heading of each agent.
We then compute the crockers from these persistent homology intervals over all times.

We make the following choices in the computations of our crockers.
\begin{itemize}
\item We only compute persistent homology in dimensions zero and one.
\item We compute the Vietoris--Rips filtration up to scale parameter $\varepsilon=0.35$.
These computations are discretized to consider 50 equally-spaced values of $\varepsilon$ between $\varepsilon=0$ and $\varepsilon=0.35$, inclusive.
\item We consider 18 different smoothing values: from $\alpha=0$ to $\alpha=0.17$ inclusive, with steps of size $0.01$ in between.
\end{itemize}

\begin{figure}[ht]
\centering
\includegraphics[width=4in]{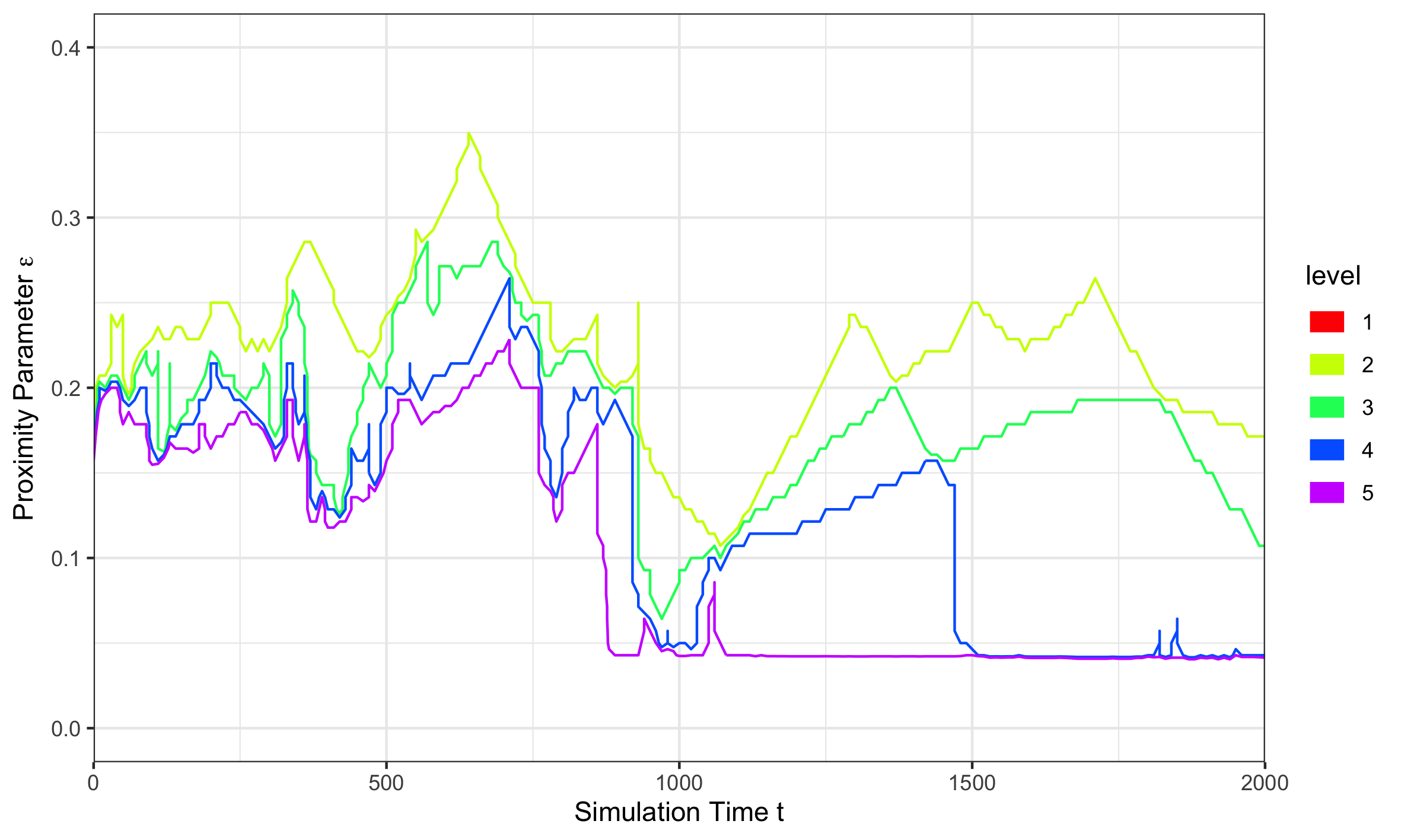}
\captionsetup{width=.9\linewidth}
\caption{An example $H_0$ crocker plot of a simulation from the Viscek model with noise parameter $\eta=0.02$.
This is the same as an $\alpha$-cross section of a crocker stack when $\alpha$ = 0.
Contour levels above $\beta_0=6$ are not displayed and are interpreted as noise.
}
\label{fig:crockerPlot}
\end{figure}

\begin{figure}[ht]
\centering
\includegraphics[width=5in]{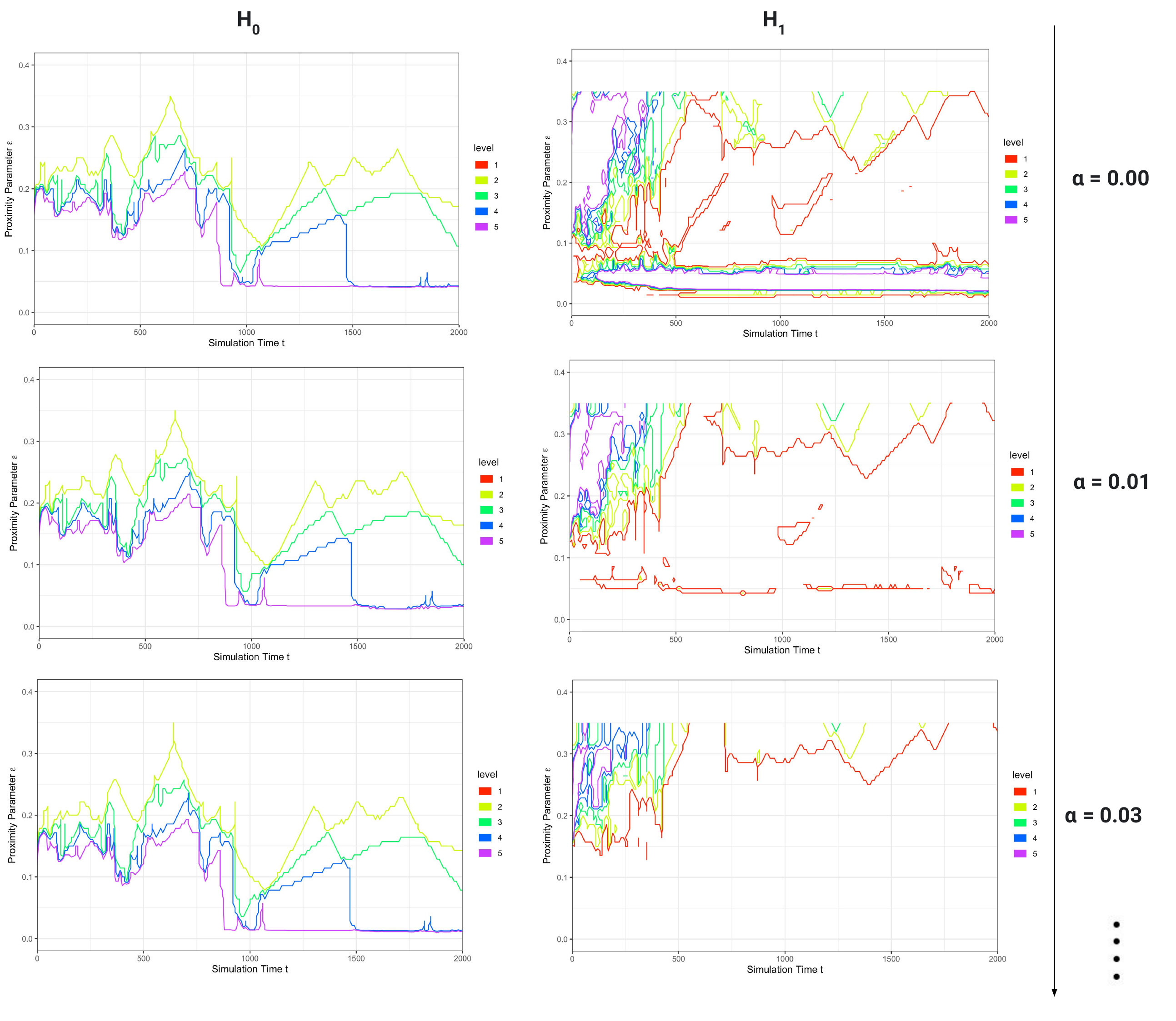}
\captionsetup{width=.9\linewidth}
\caption{An example $H_0$ and $H_1$ crocker stack for a simulation from the Viscek model with noise parameter $\eta=0.02$.
This figure shows the shifts of Betti curves in $H_0$ and $H_1$ as smoothing parameter $\alpha$ increases from $0$ to $0.01$ and $0.03$.}
\label{fig:pseudo-crockerstackpanel}
\end{figure}

Figure~\ref{fig:crockerPlot} is an example of an $H_0$ crocker plot for a simulation with noise parameter $\eta=0.02$.
We display only the contours of Betti numbers $\beta_0 \leq 6$, interpreting larger contours as noise.
In Figure~\ref{fig:crockerPlot}, notice the large region with two connected components, namely $\beta_0=2$, over the time range from approximately $t=1100$ to 2000.
This can be interpreted as two connected components for a wide range of both scale parameter $\varepsilon$ and simulation time $t$.
By looking only at the $0$-smoothed crocker plot, there is \emph{a priori} no guarantee that these are the \emph{same} components as scale $\varepsilon$ varies.
However, since the crocker stack contains enough information to recover the persistent homology barcodes, one can confirm these are the same connected components as $\varepsilon$ varies by considering the later smoothings $\alpha>0$ in the crocker stack.
To verify these are the same connected components also as time $t$ varies, one would instead want to look at the vineyard representation.

Figure~\ref{fig:pseudo-crockerstackpanel} illustrates a stack of $\alpha$-smoothed $H_0$ and $H_1$ crocker plots for the same simulation, with the $\alpha$ values $0$, $0.01$, and $0.03$ shown.
In $H_0$, the Betti curves translate down as $\alpha$ increases; by contrast, in $H_1$, the Betti curves morph shapes as $\alpha$ increases.

We will cluster different simulations based on their crocker representations, including crocker plots, single $\alpha$-smoothed crocker plots, and a crocker stack, and we compare the clustering accuracies in Section~\ref{ssec:Findings}.

\subsubsection{Experiments}\label{ss:experiments}

We create four different experiments of increasing levels of clustering difficulty by considering different collections of noise values $\eta$ that we will try to predict from simulated data.
\begin{itemize}
\item Experiment~1.
Five $\eta$ values: $\eta = 0.01$, $0.5$, $1$, $1.5$, $2$.
\item Experiment~2.
Three $\eta$ values: $\eta = 0.01$, $0.1$, $1$.
\item Experiment~3.
Six $\eta$ values: $\eta = 0.01$, $0.02$, $0.19$, $0.2$, $1.99$, $2$.
\item Experiment~4.
Fifteen $\eta$ values: $\eta = 0.01$, $0.02$, $0.03$, $0.05$, $0.1$, $0.19$, $0.2$, $0.21$, $0.3$, $0.5$, $1$, $1.5$, $1.9$, $1.99$, $2$.
\end{itemize}
Note that the more similar $\eta$ is, the more difficult it will be to cluster the data correctly.
As a thought experiment, consider two simulations with $\eta=0.01$ and $0.02$, respectively.
Both simulations will very quickly produce an aligned group, and once the groups are aligned, they are nearly indistinguishable.
The most distinguishing data is during the transient
times, but due to the quick equilibration of the system, there is relatively little transient data.

In all four experiments, we use time step equal to 10 for the crockers; additionally, we compute the crocker plot for time step equal to 40 in Experiment~2 to see how subsampling time affects accuracy.
When computing the order parameter, we instead use time step equal to 1 (in order to get the ``best" result the order parameter can provide); in Experiment~2, we also compute the order parameter with time step equal to 10 and 40 for comparison purposes.

\subsubsection{Distance matrices}

In each experiment, we vectorize the order parameter time series and crocker representations for each simulation, calculate the pairwise Euclidean distance between those vectors, and summarize the distances in a pairwise distance matrix.
To vectorize the crocker representations, we take the crocker matrix (or $\alpha$-smoothed crocker matrix) and concatenate the rows, and for the crocker stack, we further concatenate each level of $\alpha$.
Images of distance matrices for the order parameter and $H_{0,1}$ crocker plot (where the $H_0$ and $H_1$ vectorized crocker plots have been concatenated) are shown in Figures~\ref{fig:distME1}--\ref{fig:distME4} for each of the four experiments, respectively. The distance matrices for crocker stacks are visually similar to those for crocker plots and thus will not be shown here.
In each of the four pairs of comparisons, the crocker distance matrix is more structured than the order parameter distance matrix.
This is consistent with the higher clustering accuracy based on crocker plots compared to order parameters, which we discuss in Section~\ref{ssec:Findings}.
That is, distances between simulations arising from the same (or similar) noise parameter(s) typically have a smaller distance than those from different noise parameters, resulting in a block structure in the crocker distance matrix that is not as apparent in the order parameter distance matrix.

\begin{figure}[ht]
\centering
\includegraphics[width=5in]{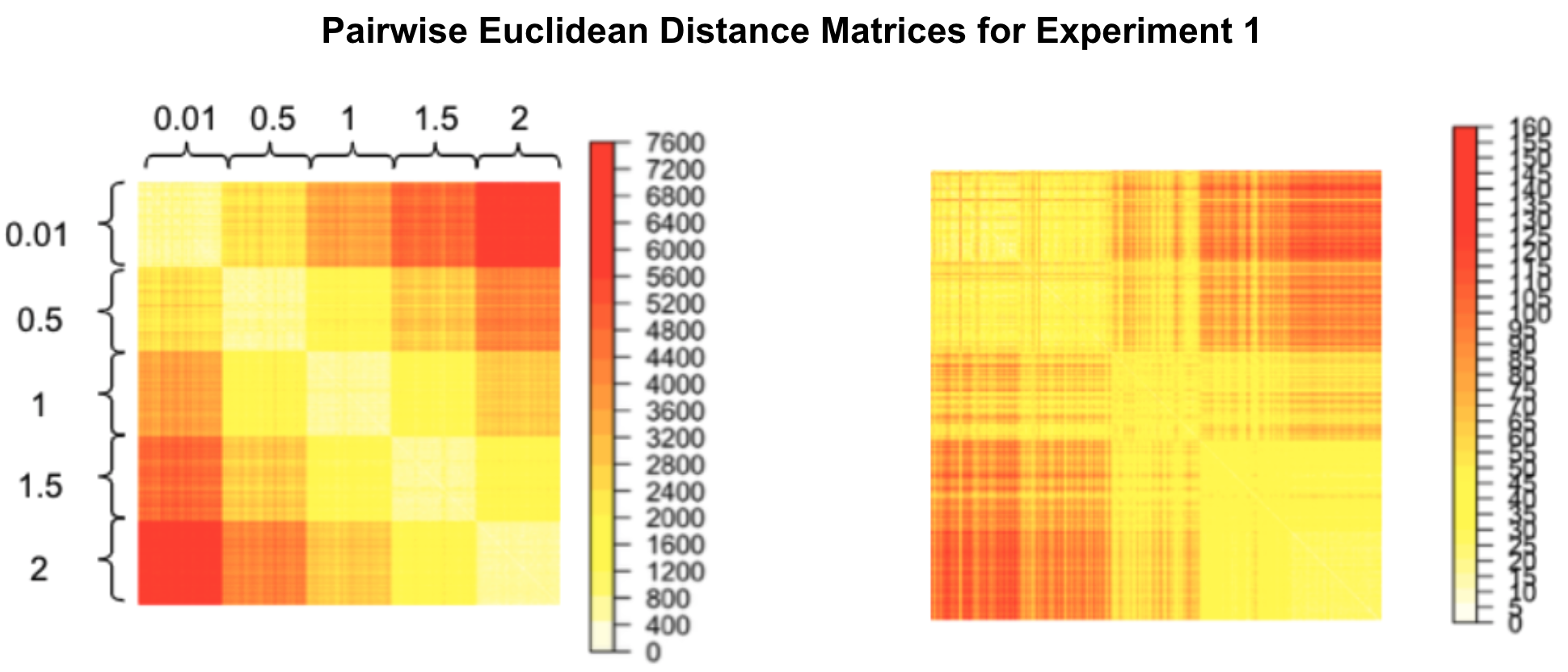}
\captionsetup{width=.9\linewidth}
\caption{
The color scale corresponds to values in the distance matrix;
red means larger distances, and yellow means smaller distances.
The 100 simulations of each noise parameter $\eta=0.01,0.5,1,1.5,2$ are listed in order and annotated in the left matrix.
The $H_{0,1}$ crocker distance matrix is more structured (Left) than the order parameter distance matrix (Right).}
\label{fig:distME1}
\end{figure}
\begin{figure}[ht]
\centering
\includegraphics[width=5in]{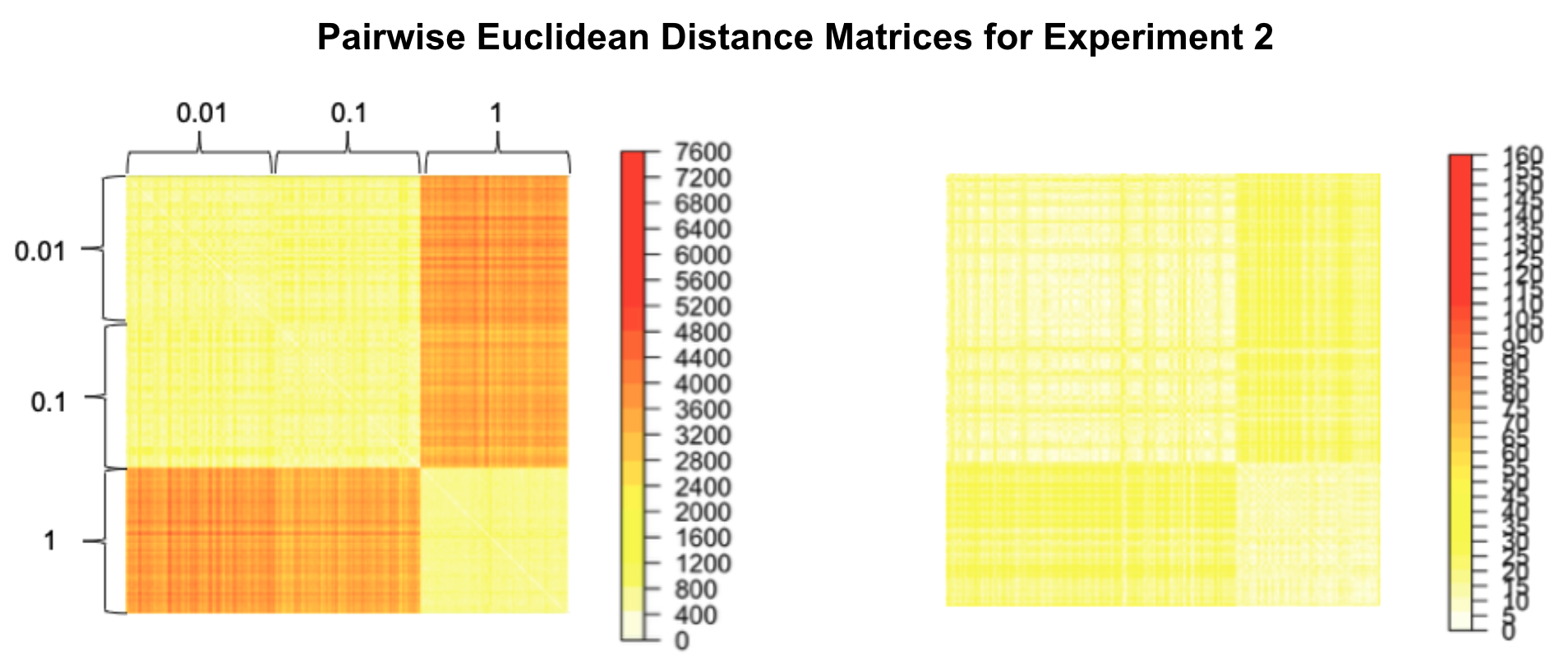}
\captionsetup{width=.9\linewidth}
\caption{
The $H_{0,1}$ crocker distance matrix is more structured (Left) than the order parameter distance matrix (Right).}
\label{fig:distME2}
\end{figure}

\begin{figure}[ht]
\centering
\includegraphics[width=5in]{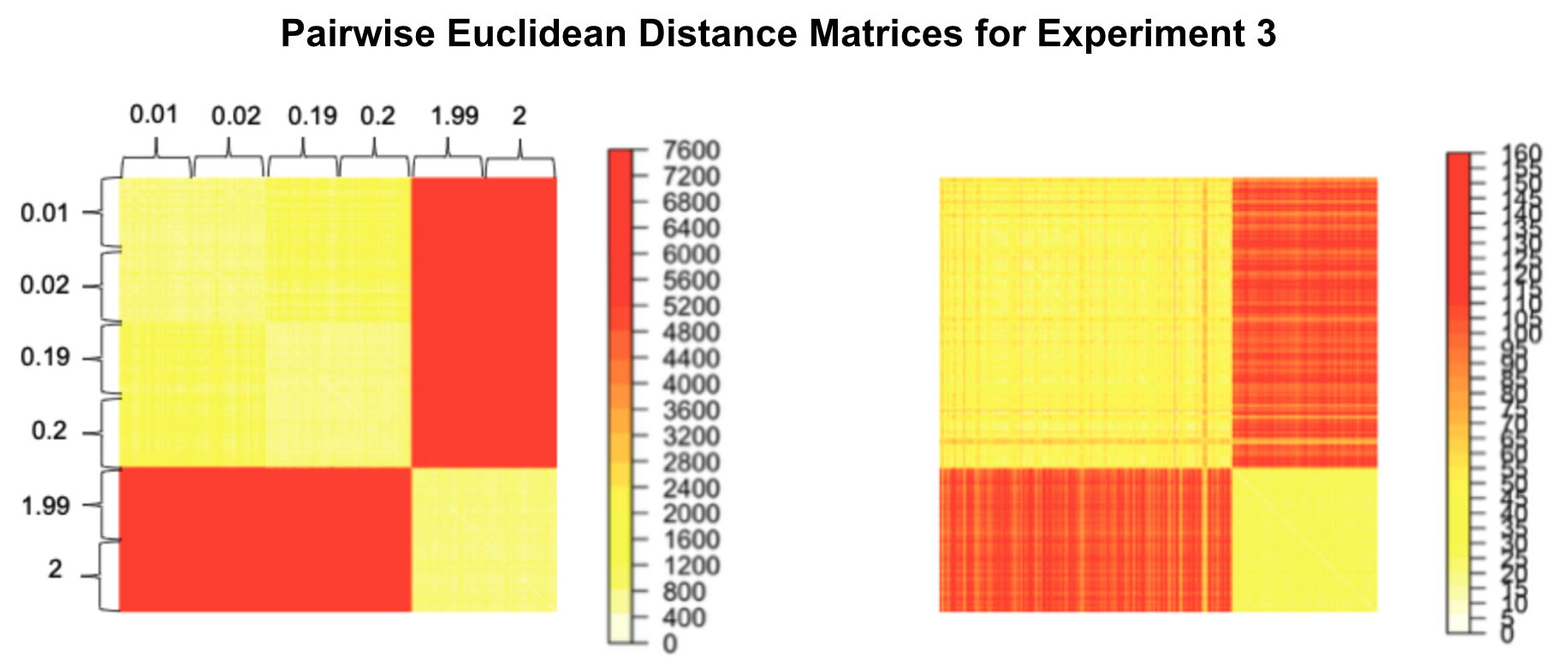}
\captionsetup{width=.9\linewidth}
\caption{
The $H_{0,1}$ crocker distance matrix is more structured (Left) than the order parameter distance matrix (Right).}
\label{fig:distME3}
\end{figure}

\begin{figure}[ht]
\centering
\includegraphics[width=5in]{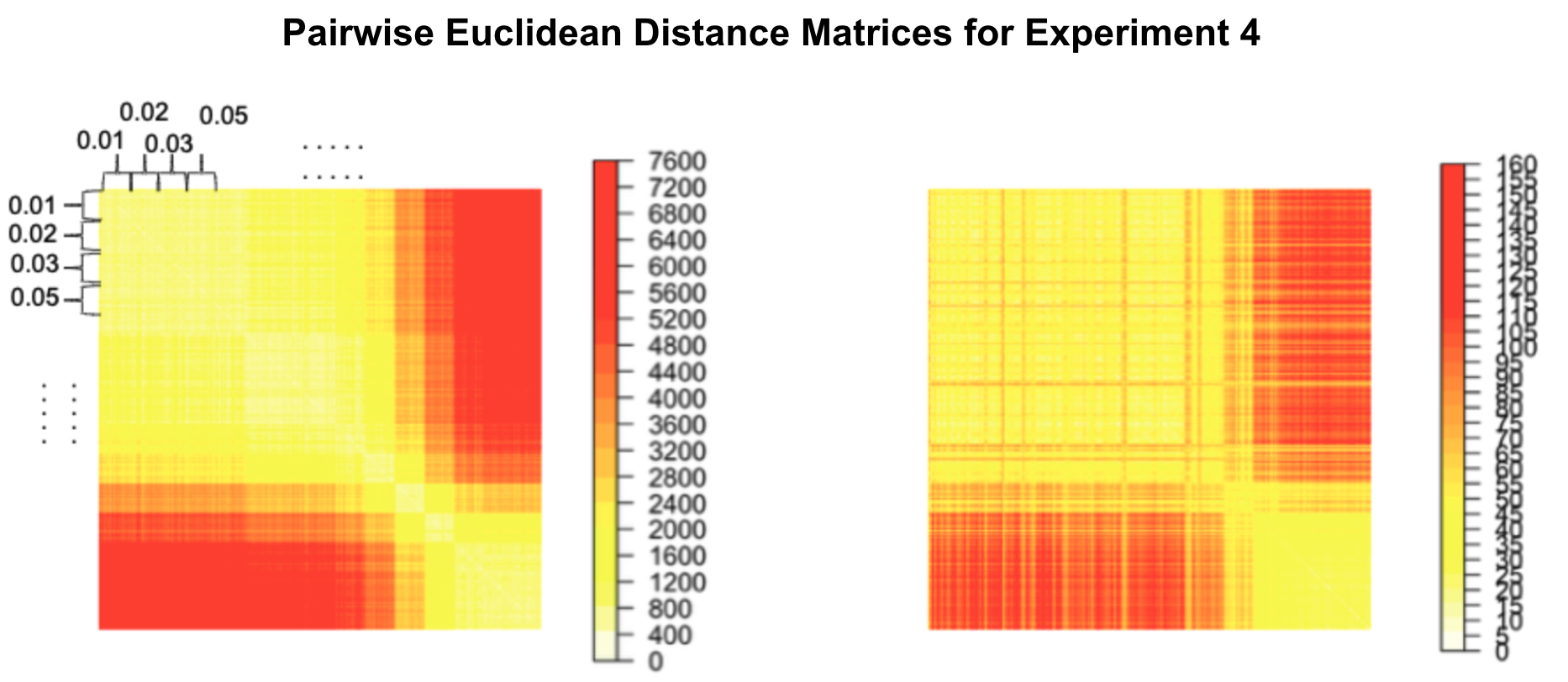}
\captionsetup{width=.9\linewidth}
\caption{
The $H_{0,1}$ crocker distance matrix is more structured (Left) than the order parameter distance matrix (Right).}
\label{fig:distME4}
\end{figure}

\subsubsection{Clustering method: $K$-medoids}
We use the \emph{$K$-medoids} algorithm to cluster the simulations from each experiment~\cite{kaufman1987clustering,park2009simple}.
This algorithm minimizes the sum of pairwise dissimilarities between data points by searching for $K$ objects from the data set, called \emph{medoids}, and then partitioning the remaining observations to their closest medoid, resulting in $K$ clusters.
The medoid is the most centrally located object of each cluster and is, in fact, an observation from the data set.
Specifically, a medoid will correspond to a particular simulation in our clustering experiments which we can trace back to the underlying noise parameter $\eta$ of the simulation.
The $K$-medoids algorithm is often more robust than the ubiquitous $K$-means algorithm, which minimizes the sum of squared Euclidean distance and thus, is more sensitive to outliers~\cite{arora}.
Another benefit of $K$-medoids is that it can take as input a distance matrix, rather than a set of feature vectors, which will be important in Section~\ref{ssec:DistanceOnVineyards} where we compute a distance on stacked sets of persistence diagrams which do not have vector representations.

We use the \emph{Partitioning Around Medoids (PAM)} algorithm in the R package ``cluster"~\cite{R-cluster} to perform the $K$-medoids algorithm, setting $K$ to be equal to the number of distinct noise parameter values $\eta$ in each experiment.
For instance, $K=5$ in Experiment~1, and $K=15$ in Experiment~4.
We compute the classification accuracy as the percentage of simulations found to be in a cluster whose medoid comes from the same noise parameter value.
Simulations partitioned into a cluster whose medoid does not come from the same noise parameter are misclassified.

\subsubsection{PCA vs.\ non-PCA}\label{ssec:PCAnonPCA}

The vectorized version of the alignment order parameter has 2001 time values ($t=0$ to 2000), and accordingly the time series are of dimension 2001. 
On the other hand, a crocker plot, which has been downsampled by a factor of 10, has 201 time values and 50 $\varepsilon$ values, resulting in a 10050-dimensional vectorized crocker for $H_0$ and $H_1$, and resulting in a 20100-dimensional vectorized crocker for the concatenation $H_{0,1}$.
It might not be ``fair" to directly compare the clustering results based on the different kinds of vectors of vastly differing dimensions.
To address this problem, we reduce the dimension of each vector to three using \emph{Principal Component Analysis (PCA)}~\cite{PCAJolliffe1986}.
This is reasonable as the first three principal components of each kind of feature vector capture about 90\% of the variance.
After reducing both the order parameter and crocker vectors for each simulation to 3-dimensional vectors, we create distance matrices based on the PCA-reduced vectors for each experiment, and then compare the PCA clustering accuracy with non-PCA clustering accuracy.

\subsection{Key findings} \label{ssec:Findings}

\subsubsection{Results summary}\label{ssec:ResultsSummary}

Table~\ref{table:AccuracySummary} shows a summary of the accuracies from clustering using $K$-medoids on each of the four experiments with different input feature vectors (order parameters, crocker plots, and crocker stacks) using both the full representations (non-PCA) and dimensionality reduced versions (PCA, italicized).
When we refer to the crocker stacks in the table, we are considering the stack of the 18 $\alpha$-smoothed crocker plots discussed in Section~\ref{ssec:CrockerPlot}, where we vectorize the crocker for each $\alpha$ and then concatenate.
In Section~\ref{ssec:SingleAlpha}, we consider the clustering accuracy of single $\alpha$-smoothed crocker plots in comparison with this stacked version.
The crocker representations consider the homology dimensions $H_0$ and $H_1$ as well as the concatenation $H_{0,1}$ of the two.

\begin{table}[ht]
\caption{Summary of the clustering accuracy on four different experiments (abbreviated Exp.) with three different feature vectors: order parameters, crocker plots, and crocker stacks.
For crocker plots and crocker stacks, we distinguish different homological dimensions: $H_{0,1}$, $H_0$, and $H_1$.
The top accuracy scores of each column are bolded.
This table summarizes results with time step 1 for order parameters and time step 10 for crocker representations.
Results with other time steps are discussed in Section~\ref{ssec:ResultsSummary}.
Clustering results with feature vectors that have been reduced to 3 dimensionsby PCA are shown in italics, while full feature vectors are not italicized.}
\label{table:4ExperimentsAccuracyPCAnonPCA.B}
\renewcommand{\arraystretch}{1.3}
\centering
\scalebox{0.95}{\centering
\begin{tabular}{|>{\bfseries}l||c|>{\em}c|c|>{\em}c|c|>{\em}c|c|>{\em}c|}  

\hline   
& \multicolumn{2}{c|}{\textbf{Exp.\ 1}}  & \multicolumn{2}{c|}{\textbf{Exp.\ 2}}  & \multicolumn{2}{c|}{\textbf{Exp.\ 3}} & \multicolumn{2}{c|}{\textbf{Exp.\ 4}} \\ \hline \hline
Order Parameters & 0.63 & 0.51 & 0.61 & 0.59  & 0.35 &  0.35  & 0.21 & 0.17 \\ \hline \hline

Crocker Plots, $H_{0,1}$ & \textbf{1.00} & \textbf{1.00} & 0.67  & 0.67 & 0.44 & 0.43  & \textbf{0.42}  & \textbf{0.43} \\ \hline
Crocker Plots, $H_0$ & \textbf{1.00} & \textbf{1.00} & 0.67  & \textbf{0.77} & 0.45 & 0.43  & 0.39  & \textbf{0.43} \\ \hline
Crocker Plots, $H_1$ & 0.98 & 0.99 & \textbf{0.71}  & 0.67 & 0.36 & 0.35  & 0.37  & 0.33 \\ \hline \hline

Crocker Stacks, $H_{0,1}$ & \textbf{1.00} & 0.98 & 0.67 & 0.67 & 0.47 & 0.38 & 0.41 & 0.35 \\ \hline 
Crocker Stacks, $H_0$ & \textbf{1.00} & \textbf{1.00} & 0.67 & 0.67 & \textbf{0.49} & \textbf{0.46} & 0.41 & 0.41 \\ \hline
Crocker Stacks, $H_1$ & 0.96 & 0.98 & 0.63 & 0.67 & 0.34 & 0.37 & 0.32 & 0.35 \\ \hline

\end{tabular}}
\label{table:AccuracySummary}
\end{table}

Crocker plot and stack clustering accuracies are higher than order parameter accuracies across all experiments.
We perform paired sample t-tests to compare the means of experiment accuracies between order parameter and crocker plot ($H_{0,1}$) feature vectors  and between order parameter and crocker stack ($H_{0,1}$) feature vectors.
The 1-tail p-value for the t-test between order parameters and crocker plots is $0.04$, and the p-value for the t-test between order parameters and crocker stacks is $0.03$, which are both significant at the significance criterion $0.05$ level.
In other words, the means of the four experiments' accuracies for both crocker plots and stacks are significantly higher than for order parameters. 
As expected, Experiment~1 accuracy is the highest compared to other experiments for both order parameters and crocker plots.
This is because the noise $\eta$ parameters are approximately evenly spaced in Experiment~1.
Experiment~4 accuracy is the lowest, since Experiment~4 has both close and very different $\eta$ parameters, which confuses classification.
Generally speaking, PCA (shown in italics in Table~\ref{table:4ExperimentsAccuracyPCAnonPCA.B}) and non-PCA accuracies are comparable across all four experiments.
It is striking to us that even though we have reduced the dimension of the data down to 3 via PCA, we observe little degradation in cluster accuracy.
The accuracies for crocker stacks are comparable with those for crocker plots.



Since $H_{0,1}$ distance matrices generally contain more information than either $H_0$ and $H_1$, they often generate higher clustering accuracies than either $H_0$ or $H_1$.
We observe two exceptions for non-dimensionality reduced crocker plots: $H_1$ accuracy for Experiment~2 is slightly higher than $H_{0,1}$ and $H_0$ accuracy; $H_0$ accuracy for Experiment~3 is slightly higher than $H_{0,1}$ and $H_0$ accuracy.
For non-dimensionality reduced croker stacks, $H_0$ accuracy is slightly higher than $H_{0,1}$ in Experiment 3.

Among order parameters, crocker plots, and crocker stacks, the stacks encode the most information, since they contain $\alpha$-smoothed crocker plots over several different $\alpha$ values.
In $H_0$, the contour lines of $\alpha$-smoothed crocker plots continuously shift down as $\alpha$ increases. 
This is because, in a Vietoris-Rips complex, all vertices are born at scale $\varepsilon=0$.
Thus, the contour lines separating different ranks in the crocker plot translate as $\alpha$ increases, but are otherwise unchanged.
In contrast, in $H_1$, the contour lines of $\alpha$-smoothed crocker plots do not continuously shift down but morph in shape.
See the links for videos of \href{https://youtu.be/nv9QAYSQTFc}{$H_0$} and \href{https://youtu.be/_SIrOYUctzY}{$H_1$}
crocker stacks from a simulation corresponding to noise parameter $\eta=0.02$ of the Viscek model.

To further detail how the clustering accuracies in Table~\ref{table:AccuracySummary} arise, we consider Experiment 3 as a case study.
Misclassifications can occur for two reasons.
First, as alluded to previously, it will be very difficult to accurately cluster simulations that have different values of $\eta$ that are all in the small-noise regime, leading to strong alignment of the system.
When the system aligns strongly and quickly, information contained in the transient state is lost.
Second, even when the system is not in the strong-alignment regime, misclassifications may occur simply when values of $\eta$ are sufficiently close together.
These difficulties are exemplified in Table~\ref{table:ConfusionMatrix}, which shows the confusion matrix of the $H_{0,1}$ crocker plots of Experiment 3 clustered using $K$-medoids, which contains simulations with $\eta$ = 0.01, 0.02, 0.19, 0.2, 1.99.
The rows represent the actual simulations corresponding to each noise parameter $\eta$ while the columns represent the parameter of the cluster medoid to which a simulation is assigned.
Even though $K=6$ clusters were formed, the six medoids correspond to simulations from only three distinct noise parameters $\eta=0.02$, $0.2$, $2$.
That is, there were two medoids from each of these three noise parameter classes selected by the algorithm, and we group clusters with the same noise parameter together.
All simulations corresponding to noise parameters $\eta=0.01$, $0.19$, $1.99$ are misclassified as there are no medoids selected from these parameter classes.
\begin{table}[ht]
\caption{Confusion matrix using $K$-medoids to cluster the $H_{0,1}$ crocker plots corresponding to simulations of Experiment 3 ($\eta$=0.01, 0.02, 0.19, 0.2, 1.99, 2).
The rows represent the actual simulations corresponding to each noise parameter $\eta$ while the columns represent the parameter of the cluster medoid to which a simulation is assigned.
Even though $K=6$ clusters were formed, the six medoids correspond to simulations from only three distinct noise parameters $\eta=0.02,0.2,2$.}
\label{table:ConfusionMatrix}
\renewcommand{\arraystretch}{1.3}
\centering
\scalebox{0.95}{\centering
\begin{tabular}{|>{\bfseries}l||c|c|c|}   
\hline 
 & \multicolumn{3}{c|}{\textbf{$K$-medoids Clusters}} \\ \hline
& \setrow{\bfseries} $\eta$ = 0.02  & \setrow{\bfseries} $\eta$ = 0.20  & \setrow{\bfseries} $\eta$ = 2.00  \\ \hline \hline
$\eta$ = 0.01 & 69 & 31 & 0  \\ \hline
$\eta$ = 0.02 & 64 & 36 & 0  \\ \hline
$\eta$ = 0.19 & 4 & 96 & 99 \\ \hline
$\eta$ = 0.2 & 1 & 99 & 0  \\ \hline
$\eta$ = 1.99 & 0& 0 & 100 \\ \hline 
$\eta$ = 2.00 & 0& 0& 100 \\ \hline
\end{tabular}
}
\end{table}
Notice that 69 of 100 simulations from class $\eta$ = 0.01 are partitioned into a cluster with a medoid coming from class $\eta=0.02$, and 64 of 100 simulations from class $\eta$ = 0.02 are partitioned into the same cluster.
This pattern is consistent for simulations with $\eta$ values of the same order of magnitude: simulations from classes $\eta=0.19$ and 0.2 are largely partitioned into a cluster with a medoid coming from class $\eta=0.2$, and simulations from classes $\eta=1.99$ and 2 are all partitioned into a cluster with a medoid coming from class $\eta=2$.
In addition, simulations from the first four classes ($\eta$ = 0.01, 0.02, 0.19, 0.2) are partitioned into either of the clusters with medoids from $\eta=0.02$ or 0.2.
Recall that values of $\eta$ that are in the alignment regime will produce data that is essentially identical apart from a very brief transient phase and hence, are difficult to distinguish.

We now consider the effect of subsampling time on classification accuracy.
The order parameter clustering accuracy for Experiment~2 with time step 1 is 0.61, and this is the same as subsampling the data by time steps of 10 and 40.
In addition, the crocker plot and crocker stack accuracy for Experiment~2 with time steps 10 and 40 are the same.
This shows that the effect of these subsamplings of time on clustering accuracy is negligible in this particular experiment.

\subsubsection{$\alpha-$smoothed crocker plots: stack vs.\ single $\alpha$'s}\label{ssec:SingleAlpha}
We compare the clustering accuracy of the $H_{0,1}$ stacked $\alpha$-smoothed crocker plots (with 18 $\alpha$ values combined) and those with a single $\alpha$ value in Table~\ref{table:SingleAlphaAccuracy}.
The crocker stack performs comparably to the individual $\alpha$-smoothed crocker plots.

As $\alpha$ increases, accuracy decreases.
This makes sense because as $\alpha$ increases, more smoothing of distinct topological features occurs, which may ignore some information necessary for parameter identification.

\begin{table}[ht]
    \caption{Summary of the clustering accuracy for the four experiments based on single $\alpha$-smoothed crocker plots in $H_{0,1}$ as input feature vectors to $K$-medoids, and the accuracy based on the crocker stack (with 18 $\alpha$ values combined).
    The top accuracy scores of each column are bolded.
Recall that when $\alpha=0$, the $\alpha$-smoothed crocker plot is equivalent to the standard crocker plot of~\cite{topaz2015topological}.}
\label{table:SingleAlphaAccuracy}
\renewcommand{\arraystretch}{1.3}
\centering
\scalebox{0.95}{\centering
\begin{tabular}{|>{\bfseries}l||c|c|c|c|}
\hline   
 & \setrow{\bfseries} Exp.\ 1  & \setrow{\bfseries} Exp.\ 2  & \setrow{\bfseries} Exp.\ 3 & \setrow{\bfseries} Exp.\ 4  \\ \hline \hline
stack & \bf{1.00} & 0.67 & 0.47 & 0.41 \\ \hline
$\alpha$ = 0.00 & \bf{1.00} & 0.67  & 0.44 & \bf{0.42}  \\ \hline
$\alpha$ = 0.01 & \bf{1.00} & 0.67 & 0.46 & 0.40 \\ \hline
$\alpha$ = 0.03 & \bf{1.00} & 0.67 & \bf{0.49} & 0.40 \\ \hline
$\alpha$ = 0.05 & \bf{1.00} & 0.58 & 0.44 & 0.38 \\ \hline
$\alpha$ = 0.08 & 0.99 & 0.67 & 0.39 & 0.36 \\ \hline 
$\alpha$ = 0.11 & 0.97 & 0.67 & 0.33 & 0.35 \\ \hline
$\alpha$ = 0.13 & 0.92 & \bf{0.73} & 0.41 & 0.28 \\ \hline
$\alpha$ = 0.17 & 0.73 & 0.66 & 0.40 & 0.21 \\ \hline
\end{tabular}
}
\end{table}

\subsubsection{Distances on stacked sets of persistence diagrams} \label{ssec:DistanceOnVineyards}

We now want to compare the clustering accuracy of the crocker plot representations to the stacked set of persistence diagrams representation.
Recall as introduced in Section~\ref{ssec:vineyards}, the latter can be thought of as time-varying persistence diagrams, since they contain the births and deaths of topological features over the scale parameter $\varepsilon$ and time $t$.
Suppose two time-varying metric spaces $\textbf{X}$ and $\textbf{Y}$ have persistence diagrams $PH(\vrp{X_t})$ and $PH(\vrp{Y_t})$ for all $t$.
One way to compute distance between these stacked sets of persistence diagrams is to compute the bottleneck distance between each pair of persistence diagrams $PH(\vrp{X_t})$ and $PH(\vrp{Y_t})$ at each time $t$ and then take the supremum of the bottleneck distances over all time values.
This \emph{supremum bottleneck distance} (which we henceforth refer to as the bottleneck distance between stacked sets of persistence diagrams) is defined as follows:

\[d_b^\infty(\ph(\vrp{\bX}),\ph(\vrp{\bY}))=\sup_t d_b\Bigl(\ph(\vrp{X_t}),\ph(\vrp{Y_t})\Bigr).\]

As this computation is extremely expensive, we only compute this bottleneck distance for Experiment~2, which entails 300 simulations.
The $K$-medoids clustering accuracies based on this bottleneck distance matrix, along with corresponding accuracies of the Euclidean distances of the order parameters, crocker plots, and crocker stacks, are shown in Table \ref{table:BottleneckAccuracy}.

\begin{table}[ht]
\captionsetup{width=.9\linewidth}
\caption{Comparison of the $K$-medoids clustering accuracy of Experiment~2 of the Euclidean distance on order parameters, crocker plots, and crocker stacks, as well as the bottleneck distance on the stacked set of persistence diagrams.
All topological representations compute homology in dimensions 0 and 1, denoted $H_0$ and $H_1$.
The top accuracy scores of each column are bolded.
The parentheses on the order parameter row indicate that the same computation is performed in both columns since order parameters do not incorporate homology dimensions.
}
\label{table:BottleneckAccuracy}
\renewcommand{\arraystretch}{1.3}
\centering
\scalebox{0.95}{\centering
\begin{tabular}{|>{\bfseries}l||c|c|}   
\hline   
 & \setrow{\bfseries} $H_0$& \setrow{\bfseries} $H_1$  \\ \hline \hline
Order Parameters & (0.61) & (0.61) \\ \hline
Crocker Plots & \bf{0.67} & \bf{0.71} \\ \hline
Crocker Stacks & \bf{0.67} & 0.63 \\ \hline
Stacked Persistence Diagrams & \bf{0.67} & 0.49 \\ \hline
\end{tabular}}
\end{table}

As a stacked set of persistence diagrams is not a vector representation of the topological features, it cannot be directly fed into a machine learning algorithm as a feature vector.
However, the $K$-medoids clustering algorithm can take as input a distance matrix rather than a set of feature vectors.
The pairwise bottleneck distance between stacked sets of persistence diagrams corresponding to simulations serves as our input for $K$-medoids.
As shown in Table~\ref{table:BottleneckAccuracy},  clustering with the bottleneck distance yields higher accuracy in $H_0$ but lower in $H_1$ as compared to order parameters.
While we compute the order parameter at every time step in order to provide the ``best" possible results, we downsample in time with a step size of 10 for the bottleneck computation due to computational complexity.
The clustering accuracy for stacked persistence diagrams is either the same as or lower than crocker plots.

We now contrast the computational complexities for computing bottleneck distances of stacked persistence diagrams and Euclidean distances of crocker plots or stacks.
Both processes start with the persistent homology data from our simulations over time.
While we can directly compute the bottleneck distance between simulations from the interval data at a fixed time and then find the supremum over all times, in the crocker representations, we first need to transform the interval data to crocker plots or stacks, vectorize the crocker representation, and then compute the pairwise Euclidean distance between simulations.
Even though there is a transformational step to convert the interval data into the crocker representations, the computational time for bottleneck distance matrices is roughly four orders of magnitude larger than for Euclidean distance matrices, \emph{including} the transformation to crockers.
Since clustering with stacked persistence diagrams does not yield higher clustering accuracy for this experiment and is far more time-intensive than the crocker representations, we posit that crockers may serve as a better means for parameter identification.
Further, as crockers are vector representations, they are more amenable to a host of machine learning tools.

This concludes our experimental analysis of parameter identification of the Vicsek model using order parameters, crocker plots, crocker stacks, and stacked sets of persistence diagrams.

\section{Distances between metric spaces and persistence modules}\label{sec:distances}

We now survey a variety of notions of distances between metric spaces and persistence modules, which will be useful for describing the continuity properties of crocker stacks in Section~\ref{sec:continuity}.

\subsection{The Hausdorff distance}

In order to introduce the stability of persistent homology, we will need some notion of distance on metric spaces.
The Hausdorff distance~\cite{munkres1975topology} measures the distance between metric spaces $X$ and $Y$ that are ``aligned".
More precisely, we mean that $X$ and $Y$ are subsets of a larger metric space $(Z,d)$ that contains both $X$ and $Y$ as submetric spaces.
For $X\subseteq Z$ and $\delta>0$, let 
$X^\delta:=\{z\in Z~|~d(z,x)\le \delta\text{ for some }x\in X\}$
denote the \emph{$\delta$-offset} of $X$ in $Z$.

\begin{definition}
If $X$ and $Y$ are two subsets of a metric space $Z$, then the \emph{Hausdorff distance} between $X$ and $Y$ is $d_H^Z(X,Y)=\inf\{\delta>0~|~X\subseteq Y^\delta\text{ and }Y\subseteq X^\delta\}$, which is equivalent to 
\[d_H^Z(X,Y)=\max \left\{ \sup_{x\in X} \inf_{y \in Y} d(x,y),\ \sup_{y \in Y} \inf_{x \in X} d(x,y) \right\}.\] 
\end{definition}
An illustration of the Hausdorff distance is shown in Figure~\ref{fig:Hausdorff}.

\begin{figure}[ht]
\centering
\includegraphics[width=1.8in]{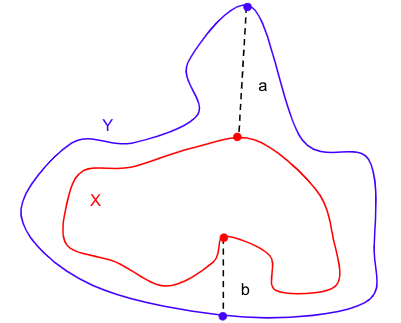}
\captionsetup{width=.9\linewidth}
\caption{Let $Z=\mathbb{R}^2$ with the Euclidean metric.
The red curve represents subset $X$ of $Z$, and the blue curve represents subset $Y$ of $Z$.
To compute the Hausdorff distance between $X$ and $Y$, we first take the supremum over all points in $Y$ of the distance to the closest point in $X$.
In this figure, the distance is $a = \sup_{y \in Y} \inf_{x \in X} d(x,y)$.
Then, we do the same for the supremum over all points in $X$ of the distance to the closest point in $Y$, as shown by $b = \sup_{x\in X} \inf_{y \in Y} d(x,y)$.
Finally, we take the maximum of the two suprema, $d_H^Z(X,Y)=\max\{a,b\}=a$.}
\label{fig:Hausdorff}
\end{figure}

The Hausdorff distance is an extended pseudo-metric on the subsets of $Z$; two sets have Hausdorff distance zero if and only if they have the same closure, and the Hausdorff distance between unbounded sets can be infinite.
When restricted to the set of all non-empty compact subsets of $Z$, the Hausdorff distance is in fact a metric.
We often write $d_H$, instead of $d_H^Z$, in order to simplify notation when space $Z$ is clear.

\subsection{Gromov--Hausdorff distance}
\label{ssec:gh}

The Gromov--Hausdorff distance~\cite{BuragoBuragoIvanov} allows us to define a notion of distance between two metric spaces that are not aligned in any sense.

\begin{definition}
The \emph{Gromov--Hausdorff} distance between two metric spaces $X$ and $Y$ is
\[d_\gh(X,Y)=\inf_{Z,f,g}d_H^Z(f(X),g(Y)),\]
where the infimum is taken over all possible metric spaces $Z$ and isometric embeddings $f\colon X\to Z$ and $g\colon Y\to Z$.
\end{definition}

The Gromov--Hausdorff distance is an extended pseudo-metric on metric spaces (non-isometric spaces, such as the rationals and the reals, can have Gromov--Hausdorff distance zero).
The Gromov--Hausdorff distance is a metric when restricted to the quotient space of compact metric spaces under the equivalence relation of isometry.

\subsection{The stability of persistent homology}
\label{ssec:stability-ph}

By the stability of persistent homology~\cite[Theorem~5.2]{ChazalDeSilvaOudot2014}, if $X$ and $Y$ are compact metric spaces, then the bottleneck distance satisfies
\[d_b(\ph(\vrp{X}),\ph(\vrp{Y})) \le 2d_\gh(X,Y).\]
An analogous bound is true if Vietoris--Rips complexes are replaced with \v{C}ech complexes.

In many applications, spaces $X$ and $Y$ are usually already embedded in the same space, in which one can use the bound $d_\gh \leq d_{H}$ in order to get a lower bound on the Hausdorff distance between these particular embeddings.
The stability theorem states that if two metric spaces are close, then the bottleneck distance between their persistence diagrams (using the Vietoris--Rips or \v{C}ech complex to construct the filtration) will also be close.
Stability is a useful property as it allows for small perturbations of the inputs, and as such, stability of persistence modules has led to their effectiveness in data analysis.
One of our motivations for defining crocker stacks is their analogous continuity properties.

\subsection{The interleaving distance}\label{ssec:interleaving}

A closely related notion to the bottleneck distance between persistence diagrams associated to persistence modules (Section~\ref{ssec:bottleneck}) is that of $\delta$-interleaving~\cite{oudot2015persistence}.

\begin{definition}
\label{def:interleaving}
For two persistence modules $V$ and $W$, a \emph{$\delta$-interleaving} (for $\delta\ge 0$) is given by two families of linear maps ($\phi_i:V^i \to W^{i+\delta}$) and ($\psi_i:W^i \to V^{i+\delta}$) such that the following diagrams commute for all $i \leq j$:
\begin{center}
\begin{tikzcd}
V^i \arrow[r]   \arrow[rd, "\phi_i"] &   V^j \arrow[rd, "\phi_j"] & \\
  & W^{i+\delta} \arrow[r] & W^{j+\delta} 
\end{tikzcd}
\hspace{.5cm}
\begin{tikzcd}
& V^{i+\delta} \arrow[r]    &   V^{j+\delta}  \\
  W^{i} \arrow[r] \arrow[ru, "\psi_i"]& W^{j} \arrow[ru, "\psi_j"] &
\end{tikzcd}
\newline
\begin{tikzcd}
V^i \arrow[rr]   \arrow[rd, "\phi_i"] & &   V^{i+2\delta} \\
  & W^{i+\delta} \arrow[ru, "\psi_{i+\delta}"] 
\end{tikzcd}
\hspace{.5cm}
\begin{tikzcd}
& V^{i+\delta} \arrow[rd, "\phi_{i+\delta}"]   \\
  W^{i} \arrow[rr] \arrow[ru, "\psi_i"] & & W^{i+2\delta}
\end{tikzcd}
\end{center}
Note that a $0$-interleaving between two persistence modules is nothing more than an isomorphism between them.
The \emph{interleaving distance} between persistence modules $V$ and $W$ is defined as
\[d_I(V, W) = \inf \{ \delta \geq 0~|~\text{there is a } \delta\text{-interleaving between } V \text{ and } W\}.\]
\end{definition}
\noindent Roughly speaking, one should think of the interleaving distance between two persistence modules as a measure of how far they are from being isomorphic.

In this paper, we do not directly use the interleaving distance on persistence modules.
However, by the isometry theorem of Lesnick~\cite{lesnick2015theory}, the interleaving distance is equal to the bottleneck distance, namely $d_I=d_b$.
As such, whenever we invoke the bottleneck distance, we could alternatively invoke the interleaving distance on persistence modules.
The morphisms induced on the persistence modules by interleaving will allow us to prove Lemma~\ref{lem:interleaving-fixed-t}, which is helpful for describing the continuity of crocker stacks.

\subsection{The rank invariant}
\label{sec:rank}

Let $V$ be a persistence module.
We recall that the collection of all natural numbers
$\rank(V(\varepsilon) \to V(\varepsilon'))$
for all choices of $\varepsilon \le \varepsilon'$ is called the \emph{rank invariant}.
The rank invariant is equivalent to the peristence barcode, in the sense that it is possible to obtain either one from the other~\cite[Theorem~12]{carlsson2009theory}.
An interesting historical comment is that persistent homology of a finite set of points $X$ was first defined as the collection of ranks of all maps of the form $H(\vr{X,\varepsilon-\alpha}) \to H(\vr{X,\varepsilon+\alpha})$, as opposed to as a persistence module, barcode, or diagram.
See for example the definition on page 151 of~\cite{EdelsbrunnerHarer}, where their $i$ is $\varepsilon-\alpha$, and where their $j$ is $\varepsilon+\alpha$.

Given a persistence module $V$, we can encode the rank invariant for $V$ as a function $g_V\colon [0,\infty)\times[0,\infty)\to\N$, where $g_V(\varepsilon,\alpha)=\rank(V(\varepsilon-\alpha) \to V(\varepsilon+\alpha))$.
The function $g_V$ is a function on a 2D domain, where the two dimensions are scale $\varepsilon$ and persistence parameter $\alpha$.

We describe a connection between one time-slice of a crocker stack and the rank invariant.
Let $\bV$ be a time-varying persistence module, and consider a fixed time $t$.
The 2D representation $g_{V_t}\colon [0,\infty)\times[0,\infty)\to\N$, for a fixed time $t$, is a cross-sectional slice of the 3D crocker stack.
Therefore, a crocker stack encodes the rank invariant, and therefore the persistence barcode, of the persistence module $V_t$ at each time $t$.
Persistence lanscapes can be interpreted as a sequence of rank functions, and therefore persistence landscapes~\cite{Bubenik2015} are also closely related to these cross-sectional slices (fixing time) in a crocker stack.

\subsection{Relationship of rank invariant and bottleneck distance} 

We now describe a relationship between the rank invariant and the bottleneck distance.
We consider two persistence modules $V$ and $W$ which decompose into a finite number of intervals, equipped with rank invariants $g_V$ and $g_W$.

\begin{lemma}\label{lem:interleaving-fixed-t}
If $d_b(V,W)\le\delta$, then for all $\varepsilon$ and $\alpha$ we have
\begin{itemize}
\item $g_V(\varepsilon,\alpha+\delta)\le g_W(\varepsilon,\alpha)$,
and
\item $g_W(\varepsilon,\alpha+\delta)\le g_V(\varepsilon,\alpha)$.
\end{itemize}
\end{lemma}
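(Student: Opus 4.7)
The plan is to use the isometry theorem $d_I = d_b$ of Lesnick (stated in Section~\ref{ssec:interleaving}) to convert the hypothesis $d_b(V,W) \le \delta$ into the existence of a $\delta$-interleaving $(\phi_i,\psi_i)$ between $V$ and $W$, in the sense of Definition~\ref{def:interleaving}. Since $V$ and $W$ decompose into finitely many intervals, such an interleaving exists at the exact value $\delta$; alternatively one can work with $(\delta+\eta)$-interleavings for every $\eta>0$ and pass to the limit $\eta\to 0^+$, using that $g_V(\varepsilon,\cdot)$ is non-increasing and piecewise constant for a finite-interval decomposition.

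Given the $\delta$-interleaving, the core observation is that the structure map $V(\varepsilon-\alpha-\delta)\to V(\varepsilon+\alpha+\delta)$ factors through the structure map $W(\varepsilon-\alpha)\to W(\varepsilon+\alpha)$. More precisely, I would verify the factorization
\[
\bigl(V(\varepsilon-\alpha-\delta)\to V(\varepsilon+\alpha+\delta)\bigr) \;=\; \psi_{\varepsilon+\alpha}\circ\bigl(W(\varepsilon-\alpha)\to W(\varepsilon+\alpha)\bigr)\circ\phi_{\varepsilon-\alpha-\delta}
\]
by a small diagram chase inside Definition~\ref{def:interleaving}. First, the naturality square for $\phi$ rewrites $(W(\varepsilon-\alpha)\to W(\varepsilon+\alpha))\circ\phi_{\varepsilon-\alpha-\delta}$ as $\phi_{\varepsilon+\alpha-\delta}\circ(V(\varepsilon-\alpha-\delta)\to V(\varepsilon+\alpha-\delta))$; then the triangle identity $\psi_{i+\delta}\circ\phi_i = (V(i)\to V(i+2\delta))$ applied at $i=\varepsilon+\alpha-\delta$ collapses $\psi_{\varepsilon+\alpha}\circ\phi_{\varepsilon+\alpha-\delta}$ into the $V$-structure map $V(\varepsilon+\alpha-\delta)\to V(\varepsilon+\alpha+\delta)$, and composing with the initial $V$-structure map recovers the full structure map.

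With the factorization in hand, since the rank of a composition is bounded above by the rank of any factor,
\[
g_V(\varepsilon,\alpha+\delta) = \rank\bigl(V(\varepsilon-\alpha-\delta)\to V(\varepsilon+\alpha+\delta)\bigr) \le \rank\bigl(W(\varepsilon-\alpha)\to W(\varepsilon+\alpha)\bigr) = g_W(\varepsilon,\alpha),
\]
which is the first inequality. The second inequality $g_W(\varepsilon,\alpha+\delta)\le g_V(\varepsilon,\alpha)$ follows by symmetry, since the roles of $V$ and $W$ in a $\delta$-interleaving are interchangeable; the same chase with $\phi$ and $\psi$ swapped produces the analogous factorization through a $V$-structure map.

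The main obstacle is the diagram chase that establishes the claimed factorization; everything after it is a one-line consequence of rank-subadditivity under composition. A minor subtlety is the passage from $d_b(V,W)\le\delta$ to an exact $\delta$-interleaving rather than a family of $(\delta+\eta)$-interleavings, handled as indicated above by the finite-interval hypothesis (or by the limit argument leveraging monotonicity of $g_V$ in $\alpha$).
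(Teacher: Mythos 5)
Your proposal is correct and follows essentially the same route as the paper's proof: both convert $d_b(V,W)\le\delta$ into a $\delta$-interleaving via the isometry theorem, establish the factorization $\bigl(V(\varepsilon-\alpha-\delta)\to V(\varepsilon+\alpha+\delta)\bigr)=\psi_{\varepsilon+\alpha}\circ\bigl(W(\varepsilon-\alpha)\to W(\varepsilon+\alpha)\bigr)\circ\phi_{\varepsilon-\alpha-\delta}$ by gluing a naturality parallelogram to a triangle identity, and conclude by rank-subadditivity of composition plus symmetry. Your additional remark on passing from the infimum $d_b(V,W)\le\delta$ to an exact $\delta$-interleaving (via the finite-interval hypothesis or a limit in $\eta$) is a point the paper's proof silently elides, and is a welcome refinement rather than a divergence.
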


\begin{proof}

By the equivalence between the bottleneck and interleaving distances of persistence modules~\cite{lesnick2015theory,oudot2015persistence}, since $d_b(V,W)\le\delta$, there exist morphisms $\phi_\varepsilon\colon V(\varepsilon)\to W(\varepsilon+\delta)$ and $\psi_\varepsilon\colon W(\varepsilon)\to V(\varepsilon+\delta)$ for all $\varepsilon$, along with the following commutative diagrams.
\begin{center}
\begin{tikzcd}
V(\varepsilon-\alpha-\delta) \arrow[rrr] \arrow[rd] &  &  & V(\varepsilon+\alpha+\delta) \\
 & W(\varepsilon-\alpha) \arrow[r] & W(\varepsilon+\alpha) \arrow[ru] & 
\end{tikzcd}

\begin{tikzcd}
W(\varepsilon-\alpha-\delta) \arrow[rrr] \arrow[rd] &  &  & W(\varepsilon+\alpha+\delta) \\
 & V(\varepsilon-\alpha) \arrow[r] & V(\varepsilon+\alpha) \arrow[ru] & 
\end{tikzcd}
\end{center}
Indeed, note that the trapezoids above can be obtained by gluing together a parallelogram and a triangle from Definition~\ref{def:interleaving}.
Define $V_{i}^{j}$ to be the map from $V(i)$ to $V(j)$.
By the first commutative diagram, $V_{\varepsilon-\alpha-\delta}^{\varepsilon+\alpha+\delta} $ = $\psi_{\varepsilon+\alpha} \circ W_{\varepsilon-\alpha}^{\varepsilon+\alpha} \circ \phi_{\varepsilon-\alpha-\delta}$.
Since the rank of a composition of linear transformations is at most the minimum rank of the transformations, we have
\[\rank(V_{\varepsilon-\alpha-\delta}^{\varepsilon+\alpha+\delta}) \leq \min\{\rank(\psi_{\varepsilon+\alpha}), \rank(W_{\varepsilon-\alpha}^{\varepsilon+\alpha}), \rank(\phi_{\varepsilon-\alpha-\delta})\}.\]
Thus, $\rank(V_{\varepsilon-\alpha-\delta}^{\varepsilon+\alpha+\delta}) \leq \rank(W_{\varepsilon-\alpha}^{\varepsilon+\alpha})$, and so $g_V(\varepsilon,\alpha+\delta)\le g_W(\varepsilon,\alpha)$.

A similar argument for the second diagram shows $g_W(\varepsilon,\alpha+\delta)\le g_V(\varepsilon,\alpha)$.
\end{proof}

We remark that the infimum $\delta\ge 0$ such that $g_V(\varepsilon,\alpha+\delta)\le g_W(\varepsilon,\alpha)$ and $g_W(\varepsilon,\alpha+\delta)\le g_V(\varepsilon,\alpha)$ for all $\varepsilon$ and $\alpha$ is the \emph{erosion distance}~\cite{Patel2018,Edelsbrunner2018,Dey2018,puuska2017erosion} between the two persistence modules $V$ and $W$.
Lemma~\ref{lem:interleaving-fixed-t} shows that if two persistence modules are close in the bottleneck distance ($\le \delta$), then their rank invariants are also close in a sense encapsulated by the erosion distance ($\le \delta$).
Could a bound in the reverse direction also be true?

\begin{question}\label{ques:interleaving-fixed-t-iff}
Is the bottleneck distance $d_b(V,W)$ equal to the erosion distance between $V$ and $W$, \emph{i.e.}, the infimum over all $\delta\ge 0$ such that 
\begin{itemize}
\item $g_V(\varepsilon,\alpha+\delta)\le g_W(\varepsilon,\alpha)$,
and
\item $g_W(\varepsilon,\alpha+\delta)\le g_V(\varepsilon,\alpha)$
\end{itemize}
for all $\varepsilon$ and $\alpha$?
\end{question}
The answer is no, as shown by an example proposed by Amit Patel and Brittany Terese Fasy.

\begin{example}
\begin{figure}[ht]
\centering
\includegraphics[width=2in]{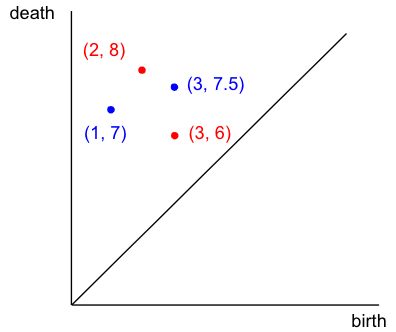}
\captionsetup{width=.9\linewidth}
\caption{Consider the persistence diagrams $\dgm_k(V) = \{ (3,6), (2,8) \}$ and $\dgm_k(W) = \{ (1,7), (3,7.5) \}$.
The bottleneck distance between the two persistence diagrams is 1.5, while the erosion distance is 1.
}
\label{fig:counterexample}
\end{figure}
Consider persistence diagrams 
\[\dgm_k(V) = \{ (3,6), (2,8) \} \text{ and } \dgm_k(W) = \{ (1,7), (3,7.5) \},\]
as shown in Figure \ref{fig:counterexample}.
The bottleneck distance between the two persistence diagrams is 1.5.
There are two natural bijections between the two persistence modules, not including matching points with the diagonal (which in this example leads to higher costs).
In bijection (i), we match $(3,6)$ in $\dgm_k(V)$ with $(3,7.5)$ in $\dgm_k(W)$ and $(2,8)$ in $\dgm_k(V)$ with $(1,7)$ in $\dgm_k(W)$.
The $L_{\infty}$ distance between matched points is 1.5.
In bijection (ii), we match $(3,6)$ in $\dgm_k(V)$ with $(1,7)$ in $\dgm_k(W)$ and $(2,8)$ in $\dgm_k(V)$ with $(3,7.5)$ in $\dgm_k(W)$.
The $L_{\infty}$ distance between matched points is 2.
The bottleneck distance is the infimum $L_{\infty}$ distance between matched points over all bijections, which is 1.5.

The erosion distance is the infimum $\delta\ge 0$ such that for all $\varepsilon$ and $\alpha$, we have $g_V(\varepsilon,\alpha+\delta)\le g_W(\varepsilon,\alpha)$,
and $g_W(\varepsilon,\alpha+\delta)\le g_V(\varepsilon,\alpha)$.
For this example, $\delta=1$, for the following reasons:
The maximum interval over which $V$ has rank two is $[3,6]$, whereas the maximum interval over which $W$ has rank two is $[3,7]$, and these regions differ by at most $\delta=1$ in their endpoints.
Similarly, the maximum interval over which $V$ has rank at least one is over $[2,8]$, whereas the maximal intervals over which $W$ has rank at least one is over either $[1,7]$ or $[3,7.5]$: enlarging $[2,8]$ by $\delta=1$ on either endpoint covers either of these intervals in $W$, and enlarging $[1,7]$ by $\delta=1$ on either endpoint covers $[2,8]$.
This is an intuitive explanation why for all $\varepsilon$ and $\alpha$, we have
\begin{align*}
g_V(\varepsilon,\alpha+1) &= \rank(V_{\varepsilon-\alpha-1}^{\varepsilon+\alpha+1}) \le \rank(W_{\varepsilon-\alpha}^{\varepsilon+\alpha}) = g_W(\varepsilon,\alpha) \quad \text{and}\\
g_W(\varepsilon,\alpha+1) &= rank(W_{\varepsilon-\alpha-1}^{\varepsilon+\alpha+1}) \le \rank(V_{\varepsilon-\alpha}^{\varepsilon+\alpha}) = g_V(\varepsilon,\alpha).
\end{align*}

In this example, the bottleneck distance ($1.5$) is larger than the erosion distance ($1$), answering Question~\ref{ques:interleaving-fixed-t-iff} in the negative.

\end{example}

According to personal correspondence with Patel and Fasy, there is an $O(n\log n)$ algorithm for computing the erosion distance, where $n$ is the number of points in the diagram, which is much faster than computing the bottleneck distance.
While Question~\ref{ques:interleaving-fixed-t-iff} reveals that the two distances are not equivalent, the erosion distance could serve as a bound on the bottleneck distance.
The rank invariants are also M\"{o}bius inversions of persistence diagrams~\cite{patel2018generalized,mccleary2020bottleneck}.

\section{Continuity of crocker stacks}\label{sec:continuity}

If two time-varying metric spaces are close to one another, then it turns out that the resulting crocker stacks are also (in some sense) close to one another.
This is referred to as the \emph{continuity} of crocker stacks.
We explain why crocker plots are not continuous, before describing the sense in which crocker stacks are continuous.

\subsection{Discontinuity of crocker plots}
\label{ssec:discont}
We first point out that crocker plots are not continuous.

\begin{figure}[ht]
\centering
\includegraphics[width=2in]{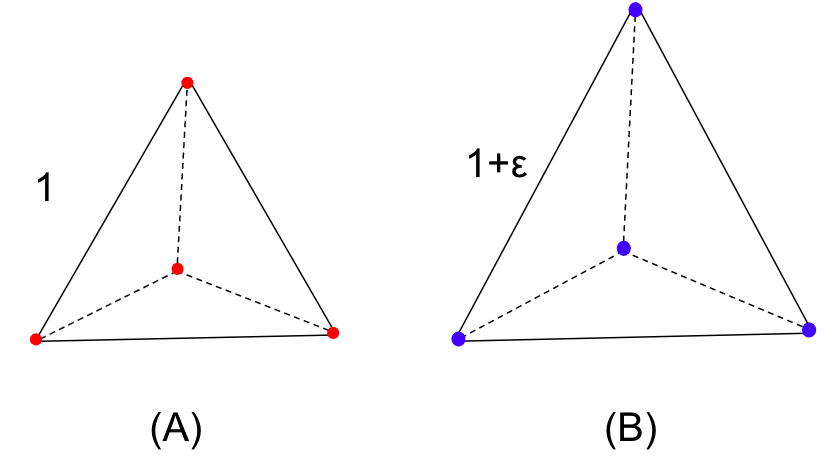}
\captionsetup{width=.9\linewidth}
\caption{Suppose $A$ and $B$ are dynamic metric spaces.
The distance between any two of the four red points in $A$ is 1 for all times $t$.
The distance between any two of the four blue points in $B$ is $1+\varepsilon$ for all times $t$.
At an arbitrary time step $t$ and scale parameter $1+\frac{\varepsilon}{2}$, we have $\beta_0^A=1$ and $\beta_0^B=4$.}
\label{fig:StabilityExample}
\end{figure}

\begin{example}\label{ex:crocker-plot-not-naively-stable}
Suppose we have two dynamic metric spaces, each with 4 points in the metric space, as shown in Figure~\ref{fig:StabilityExample}.
In the first dynamic metric space $A$, the distance between any two points is 1 for all times $t$.
In the second dynamic metric space $B$, the distance between any two points is $1+\varepsilon$ for all times $t$.
The two dynamic metric spaces are $\varepsilon$-close in the Gromov--Hausdorff distance.
However, at scale parameter $1+\frac{\varepsilon}{2}$, the first dynamic metric space $A$ at any time has 0-dimensional Betti number $\beta_0^A=1$.
Yet at any time in the second dynamic metric space, $\beta_0^B=4$ at this same scale value.
Thus, these Betti numbers differ by $\beta_0^B-\beta_0^A=4-1=3$ at scale $1+\frac{\varepsilon}{2}$.
By increasing $n=4$ points to (say) $n=1,000,000$ points or beyond, we can make the values of these Betti numbers as far apart as we want.
Under most notions of matrix distance, this would make the distance between the crocker plot matrices as far apart as we want, all while keeping the metric spaces within $\varepsilon$ in the Gromov--Hausdorff distance.
This example does not rely on time; it is really an example showing why Betti curves are not stable (in the traditional sense of $L_\infty$ distance between curves).
\end{example}

\subsection{Distances between time-varying metric spaces}

In order to describe the continuity properties of crocker stacks, we begin with some preliminaries on distances between time-varying metric spaces and time-varying persistence modules.

\begin{definition}
\label{def:time-GH}
Let $\bX$ and $\bY$ be continuous time-varying metric spaces over $t\in [0,T]$, and fix $1\le p\le \infty$.
\sloppy
The $p$-Gromov--Hausdorff distance between $\bX$ and $\bY$ is
\[d_\gh^p(\bX,\bY)=\left(\int_0^T d_\gh(X_t,Y_t)^p\ dt\right)^{1/p}.\]
When $p=\infty$ we have $d_\gh^\infty(\bX,\bY)=\sup_t d_\gh(X_t,Y_t)$.
\end{definition}

To see that this is well-defined, note that since $\bX$ and $\bY$ are continuous, the function $[0,T] \to \R$ defined by $t \mapsto d_\gh(X_t,Y_t)^p$ is a continuous function over a closed interval, and hence is integrable.

We remark that this is only a pseudo-metric, not an actual metric.
Indeed, as pointed out in Figure~1 of~\cite{kim2020spatiotemporal}, two distinct time-varying metric spaces that are not qualitatively similar can have Gromov--Hausdorff distance zero from each other at each time $t$.
The distances between multiparameter rank functions introduced in~\cite{kim2020spatiotemporal} are also stable with respect to more refined notions of distance between time-varying metric spaces.
In this paper, we restrict attention to the weaker Definition~\ref{def:time-GH} and show that crocker stacks, which are amenable for machine learning tasks, are furthermore a continuous topological invariant.

\subsection{Distances between time-varying persistence modules}

We define an $L_p$ bottleneck distance between corresponding time-varying persistence modules.

\begin{definition}\label{def:vineyard-dist}
Let $\bV$ and $\bW$ be continuous time-varying persistence modules over $t\in [0,T]$, and fix $1\le p\le \infty$.
\sloppy
The $p$-bottleneck distance between $\bV$ and $\bW$ is $d_b^p(\bV,\bW)=(\int_0^T d_b(V_t,W_t)^p\ dt)^{1/p}$.
When $p=\infty$ we have $d_b^\infty(\bV,\bW)=\sup_t d_b(V_t,W_t)$.
\end{definition}

Since $\bV$ and $\bW$ are continuous, the function $[0,T] \to \R$ defined by $t \mapsto d_b(V_t,W_t)^p$ is a continuous function over a closed interval, and hence is integrable.

Recall that our time-varying metric spaces $\bX$ are defined to have the property that $X_t$ is compact for all $t\in [0,T]$.

\begin{lemma}\label{lem:gh}
If $\bX$ and $\bY$ are continuous time-varying metric spaces over $t\in [0,T]$, then for all $1\le p\le \infty$ we have $d_b^p(\ph(\vrp{\bX}),\ph(\vrp{\bY}))\le 2d_\gh^p(\bX,\bY)$.
\end{lemma}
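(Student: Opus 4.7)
The plan is to leverage the pointwise stability of persistent homology together with monotonicity of the $L^p$ norm. The single-time inequality $d_b(\ph(\vrp{X_t}), \ph(\vrp{Y_t})) \le 2 d_\gh(X_t, Y_t)$ is exactly the stability theorem recalled in Section~\ref{ssec:stability-ph}, applied at each $t \in [0,T]$. What remains is to integrate (or take supremum of) this inequality over time and extract the desired bound on $d_b^p$.

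First I would handle the case $1 \le p < \infty$. By definition,
\[d_b^p(\ph(\vrp{\bX}),\ph(\vrp{\bY}))^p = \int_0^T d_b\bigl(\ph(\vrp{X_t}),\ph(\vrp{Y_t})\bigr)^p\, dt.\]
Applying the pointwise stability bound inside the integrand and using the monotonicity of $x \mapsto x^p$ on $[0,\infty)$ gives
\[\int_0^T d_b\bigl(\ph(\vrp{X_t}),\ph(\vrp{Y_t})\bigr)^p\, dt \;\le\; \int_0^T \bigl(2 d_\gh(X_t,Y_t)\bigr)^p\, dt \;=\; 2^p \, d_\gh^p(\bX,\bY)^p.\]
Taking $p$-th roots yields the claim. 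For $p = \infty$, the argument is even simpler: take the supremum over $t$ of both sides of the pointwise stability bound and recall that $d_b^\infty$ and $d_\gh^\infty$ are defined as such suprema.

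The one point that needs a brief justification is that the integrals above are well-defined. Because $\bX$ and $\bY$ are assumed continuous with respect to the Gromov--Hausdorff distance, the map $t \mapsto d_\gh(X_t,Y_t)$ is continuous on the compact interval $[0,T]$, hence integrable. Moreover, continuity of $\bX$ together with the pointwise stability inequality implies that $\bV = \ph(\vrp{\bX})$ is continuous with respect to the bottleneck distance (this was observed in the paper right after the definition of a time-varying persistence module), and similarly for $\bW$, so $t \mapsto d_b(V_t,W_t)$ is also continuous on $[0,T]$ and hence integrable.

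I do not anticipate a real obstacle here; the argument is essentially a one-line integration of the classical stability theorem. The only mild technical points are the measurability/integrability remarks above and the need to split the proof into $p < \infty$ and $p = \infty$ cases, which parallels the definition of $d_b^p$ and $d_\gh^p$ themselves.
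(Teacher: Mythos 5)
Your proposal is correct and follows essentially the same route as the paper's proof: apply the pointwise stability theorem $d_b(\ph(\vrp{X_t}),\ph(\vrp{Y_t}))\le 2d_\gh(X_t,Y_t)$ at each time $t$, integrate (or take suprema for $p=\infty$), and take $p$-th roots. Your added remarks on integrability match the paper's own justification following Definitions~\ref{def:time-GH} and~\ref{def:vineyard-dist}.
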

An analogous bound is true if Vietoris--Rips complexes are replaced with \v{C}ech complexes.

\begin{proof}
Let $p<\infty$.
For any $t\in[0,T]$, we have $d_b(\ph(\vrp{X_t}),\ph(\vrp{Y_t})) \le 2d_\gh(X_t,Y_t)$ by the stability of persistent homology.
Integrating over all $t\in[0,T]$ gives
\begin{align*}
&d_b^p(\ph(\vrp{\bX}),\ph(\vrp{\bY}))=\Biggl(\int_0^T d_b\Bigl(\ph(\vrp{X_t}),\ph(\vrp{Y_t})\Bigr)^p\ dt\Biggr)^{1/p} \\
\le&\Biggl(\int_0^T \Bigl(2d_\gh(X_t,Y_t)\Bigr)^p\ dt\Biggr)^{1/p}
=2\Biggl(\int_0^T d_\gh(X_t,Y_t)^p\ dt\Biggr)^{1/p} 
=2d_\gh^p(\bX,\bY).
\end{align*}
The same proof works for $p=\infty$ by replacing integrals with supremums.
Indeed,
\begin{align*}
&d_b^\infty(\ph(\vrp{\bX}),\ph(\vrp{\bY}))=\sup_t d_b\Bigl(\ph(\vrp{X_t}),\ph(\vrp{Y_t})\Bigr) \\
\le&2\sup_t d_\gh(X_t,Y_t)
=2d_\gh^{\infty}(\bX,\bY).
\end{align*}
\end{proof}

Lemma~\ref{lem:gh} gives a notion of continuity for stacked sequences of persistence diagrams.
By contrast, the vines in a vineyard are not stable as curves in time, as explained in Section~\ref{ssec:vineyards}.

\subsection{Continuity of crocker stacks}
\label{ssec:continuity-stack}

The continuity of a stacked set of persistence diagrams in the section above implies a continuity result for crocker stacks.
Recall from Definition~\ref{def:crocker-stack} that if $\bV$ is a time-varying persistence module, then
\[f_\bV(t,\varepsilon,\alpha) := \rank\left(V_t(\varepsilon-\alpha)\to V_t(\varepsilon+\alpha)\right)=g_{V_t}(\varepsilon,\alpha).\]

Suppose two time-varying metric spaces $\bX$ and $\bY$ have the property that $X_t$ and $Y_t$ are within Hausdorff distance $2\delta$ at all times $t$.
The interleaving distance between the persistence modules $\ph(\vrp{X_t})$ and $\ph(\vrp{Y_t})$
is at most $\delta$ at all times $t$.
Hence, for all $t$, $\varepsilon$, and $\alpha$, we have that $f_\bX(t,\varepsilon,\alpha+\delta)\le f_\bY(t,\varepsilon,\alpha)$ and $f_\bY(t,\varepsilon,\alpha+\delta)\le f_\bX(t,\varepsilon,\alpha)$, where by an abuse of notation we let $f_\bX$ and $f_\bY$ denote $f_{\ph(\vrp{\bX})}$ and $f_{\ph(\vrp{\bY})}$.
This can be thought of as a notion of continuity, since it says the crocker stack $f_\bX$ for $\bX$ is in some sense ``close" to the crocker stack $f_\bY$ for $\bY$.
In this subsection, we provide proofs for these observations.

\begin{lemma}\label{lem:interleaving}
If $\bV$ and $\bW$ are time-varying persistence modules, and if $d_b^\infty(\bV,\bW)\le \delta$, then for all $t$, $\varepsilon$, and $\alpha$ we have
\begin{itemize}
\item $f_\bV(t,\varepsilon,\alpha+\delta)\le f_\bW(t,\varepsilon,\alpha)$,
and
\item $f_\bW(t,\varepsilon,\alpha+\delta)\le f_\bV(t,\varepsilon,\alpha)$.
\end{itemize}
\end{lemma}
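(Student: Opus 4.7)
The plan is to reduce this immediately to the fixed-time statement already proved in Lemma~\ref{lem:interleaving-fixed-t}. The key observation is that the hypothesis $d_b^\infty(\bV,\bW)\le\delta$ is a supremum bound, and so it passes to each individual time slice: for every $t\in[0,T]$ we have $d_b(V_t,W_t)\le d_b^\infty(\bV,\bW)\le\delta$. Once we have this slice-wise bound, the work is already done by the earlier lemma.

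First I would fix an arbitrary $t\in[0,T]$ and apply the definition of $d_b^\infty$ from Definition~\ref{def:vineyard-dist} to extract $d_b(V_t,W_t)\le\delta$. Then I would invoke Lemma~\ref{lem:interleaving-fixed-t} with the pair of persistence modules $V_t$ and $W_t$ to obtain
\[g_{V_t}(\varepsilon,\alpha+\delta)\le g_{W_t}(\varepsilon,\alpha) \quad \text{and} \quad g_{W_t}(\varepsilon,\alpha+\delta)\le g_{V_t}(\varepsilon,\alpha)\]
for every choice of $\varepsilon$ and $\alpha$. Finally, unwinding Definition~\ref{def:crocker-stack} together with the identification $f_\bV(t,\varepsilon,\alpha)=g_{V_t}(\varepsilon,\alpha)$ (and likewise for $\bW$) recorded in Section~\ref{sec:rank}, these inequalities become exactly $f_\bV(t,\varepsilon,\alpha+\delta)\le f_\bW(t,\varepsilon,\alpha)$ and $f_\bW(t,\varepsilon,\alpha+\delta)\le f_\bV(t,\varepsilon,\alpha)$. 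Since $t$, $\varepsilon$, and $\alpha$ were arbitrary, this completes the argument.

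There is essentially no real obstacle here; the content of the lemma is in Lemma~\ref{lem:interleaving-fixed-t}, which used the interleaving morphisms from the isometry theorem to bound ranks of compositions. The only thing to check carefully is that the supremum definition of $d_b^\infty$ (as opposed to an $L^p$ version with $p<\infty$) is what lets us extract a pointwise-in-$t$ bound; for a general $L^p$ distance between time-varying modules one would not get such a clean slice-wise inequality. Given the chosen $p=\infty$ formulation in the hypothesis, the proof is a one-line reduction together with a notational unpacking.
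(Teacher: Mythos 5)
Your proposal is correct and matches the paper's own proof essentially verbatim: extract the slice-wise bound $d_b(V_t,W_t)\le\delta$ from the supremum hypothesis, apply Lemma~\ref{lem:interleaving-fixed-t} at each fixed $t$, and conclude via the identification $f_\bV(t,\varepsilon,\alpha)=g_{V_t}(\varepsilon,\alpha)$. Your remark that the clean pointwise-in-$t$ reduction is special to the $p=\infty$ case is also consistent with the paper's note following the lemma.
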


\begin{proof}
By hypothesis, we have $d_b(V_t,W_t)\le\delta$ for all $t\in[0,T]$.
From Lemma~\ref{lem:interleaving-fixed-t} we have
$g_{V_t}(\varepsilon,\alpha+\delta)\le g_{W_t}(\varepsilon,\alpha)$
and
$g_{W_t}(\varepsilon,\alpha+\delta)\le g_{V_t}(\varepsilon,\alpha)$ for all $t\in[0,T]$.
The conclusion follows since, by definition, we have $f_\bV(t,\varepsilon,\alpha)=g_{V_t}(\varepsilon,\alpha)$ and $f_\bW(t,\varepsilon,\alpha)=g_{W_t}(\varepsilon,\alpha)$ for all $t$, $\varepsilon$, and $\alpha$.
\end{proof}

Note that a version of the above lemma for $d_b^p$ instead of $d_b^\infty$ would not give inequalities for each $t$ but instead a single pair of inequalities that are integrated (in an $L_p$ sense) over all $t$.

The following theorem says that if the time-varying metric spaces $\bX$ and $\bY$ are nearby, then their crocker stacks are also close.
Recall that our time-varying metric spaces are defined to be compact at each point in time.

\begin{theorem}[Continuity theorem for crocker stacks]\label{thm:continuity}
If $\bX$ and $\bY$ are time-varying metric spaces, and if $d_\gh^\infty(\bX,\bY)\le\delta/2$, then the crocker stacks for $\bX$ and $\bY$ are close in the sense that for all $t$, $\varepsilon$, and $\alpha$, we have
\begin{itemize}
\item $f_\bX(t,\varepsilon,\alpha+\delta)\le f_\bY(t,\varepsilon,\alpha)$,
and
\item $f_\bY(t,\varepsilon,\alpha+\delta)\le f_\bX(t,\varepsilon,\alpha)$.
\end{itemize}
\end{theorem}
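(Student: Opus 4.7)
The plan is to compose Lemma~\ref{lem:gh} and Lemma~\ref{lem:interleaving}, which together already package essentially all the work. The statement to be proved is a claim about the crocker stack functions $f_\bX$ and $f_\bY$, which are by the abuse of notation in Section~\ref{ssec:continuity-stack} the crocker stacks of the induced time-varying persistence modules $\ph(\vrp{\bX})$ and $\ph(\vrp{\bY})$. So the proof will pass from metric spaces to persistence modules, and then from persistence modules to their rank invariants.

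First, I would apply Lemma~\ref{lem:gh} with $p=\infty$ to the hypothesis $d_\gh^\infty(\bX,\bY)\le\delta/2$. This yields
\[
d_b^\infty(\ph(\vrp{\bX}),\ph(\vrp{\bY}))\le 2d_\gh^\infty(\bX,\bY)\le \delta.
\]
This is the $L_\infty$ (supremum in $t$) bottleneck continuity for the associated time-varying persistence modules $\bV=\ph(\vrp{\bX})$ and $\bW=\ph(\vrp{\bY})$.

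Next, I would feed $\bV$ and $\bW$ into Lemma~\ref{lem:interleaving}, which immediately gives, for all $t$, $\varepsilon$, and $\alpha$, the two inequalities
\[
f_\bV(t,\varepsilon,\alpha+\delta)\le f_\bW(t,\varepsilon,\alpha) \quad \text{and} \quad f_\bW(t,\varepsilon,\alpha+\delta)\le f_\bV(t,\varepsilon,\alpha).
\]
Finally, by the convention that $f_\bX$ denotes $f_{\ph(\vrp{\bX})}$ and likewise for $\bY$, these are exactly the two inequalities in the statement of the theorem.

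There is no real obstacle here, since the heavy lifting (pointwise-in-$t$ stability of persistent homology, and the erosion-style consequence of a bottleneck bound via Lemma~\ref{lem:interleaving-fixed-t}) has already been done in the earlier lemmas. The only things worth being careful about are (i) ensuring that the $L_\infty$-versions of both lemmas are invoked, since the theorem is a pointwise-in-$t$ statement rather than an integrated one, and (ii) keeping straight the factor of two between $d_\gh$ and the persistent-homology stability bound, which is why the hypothesis reads $\delta/2$ rather than $\delta$.
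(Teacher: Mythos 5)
Your proposal is correct and follows exactly the paper's own argument: apply Lemma~\ref{lem:gh} with $p=\infty$ to obtain $d_b^\infty(\ph(\vrp{\bX}),\ph(\vrp{\bY}))\le 2d_\gh^\infty(\bX,\bY)\le\delta$, then conclude via Lemma~\ref{lem:interleaving} with $\bV=\ph(\vrp{\bX})$ and $\bW=\ph(\vrp{\bY})$. Your added remarks about using the $L_\infty$ versions of both lemmas and tracking the factor of two are accurate and consistent with the paper.
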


\begin{proof}
By Lemma~\ref{lem:gh} we have that $d_b^\infty(\ph(\vrp{\bX}),\ph(\vrp{\bY}))\le\delta$, and hence the conclusion follows from Lemma~\ref{lem:interleaving} with $\bV=\ph(\vrp{\bX})$ and $\bW=\ph(\vrp{\bY})$.
\end{proof}

An analogous result is true if crocker stacks are defined using \v{C}ech complexes in place of Vietoris--Rips complexes.

Though crocker stacks are continous in the above sense, we want to acknowledge that they are not continuous when they are interpreted as vectors equipped with the Euclidean norm, which was the norm we used on crocker stacks in Section~\ref{sec:vicsek}.
More work remains to be done on effectively and stably vectorizing time-varying topological summaries for use in machine learning applications.

\section{Conclusion}\label{sec:conclusion}

In this paper, we have provided an overview of topological tools for summarizing time-varying metric spaces, developed a new tool called the crocker stack, investigated the discriminative power of topological descriptors on a parameter recovery task, and discussed notions of continuity for these representations.

The crocker plot introduced in~\cite{topaz2015topological} is a topological summary of time-varying data.
It has been shown in~\cite{topaz2015topological, ulmer2019topological,bhaskar2019analyzing} to be useful in exploratory data analysis, statistical tests, and machine learning tasks.
However, the crocker plot is not stable in any sense.
We proposed the crocker stack as an alternative, which---like the crocker plot---can be discretized and treated as a vector in Euclidean space, making it useful in machine learning. 
Yet, the crocker stack also satisfies a continuity property, albeit with respect to a non-Euclidean metric.

A crocker stack is a 3D topological representation of a time-varying metric space that, like the crocker plot, displays all times in a single frame indexed by smoothing parameter $\alpha$.
This may be better for visualization purposes than, say, the vineyard representation.
Vineyards or stacked sets of persistence diagrams would have to be further vectorized for use in machine learning, perhaps by using say, persistence images~\cite{adamsImages2017}.
The stable multiparameter rank function and distance measures in~\cite{kim2020spatiotemporal} are more discriminatory for time-varying metric spaces than the (pseudo-)distances we consider, but it is not obvious how to vectorize them for use in machine learning.
 
Through computational experiments, we show that crocker plots and stacks are more effective than the alignment order parameter, a traditional method derived from physics, at parameter identification in an ubiquitous model of biological aggregations.
However, computing crocker plots and stacks are more time-intensive than computing order parameters.
At each time step, a single number---the normalized average velocity---is computed in the order parameter, while persistent homology is computed in order to produce crocker plots and stacks.
In our experiments, computing persistent homology is roughly two orders of magnitude more expensive than computing order parameters.
The discriminative capability of the crocker representations may provide benefits that outweigh this computational complexity, however.

We end with a collection of questions and possible directions for researchers to explore.
\begin{enumerate}
\item How do different choices of metrics affect the discriminatory power of crocker stacks in Section~\ref{sec:vicsek}?
For every $1\le p\le \infty$ and $1\le q\le \infty$, there exists a $(p,q)$ metric on crocker stacks: to compare two fixed persistence diagrams, we can choose any $1\le p\le \infty$ and use an $L_p$ Wasserstein distance.
To compare two time-varying persistence diagrams, we then could weight these distances over all times by choosing any $1\le q\le \infty$ averaging.
\item What is the discriminatory power of a time-varying erosion distance say, for example, in the experiments in Section~\ref{sec:vicsek}?
\item How well do time-varying persistence images~\cite{adamsImages2017} or time-varying persistence landscapes~\cite{bubenik2015statistical} work for machine learning classification tasks?
\item What are other notions of time-varying topological invariants that are both stable and also vectorizable in a way that is useful in machine learning?
\end{enumerate}

\section*{Acknowledgments}
We are grateful to Matraiyee Deka, Brittany Terese Fasy, Tom Halverson, Michael Lesnik, Dmitriy Morozov, and Amit Patel for helpful conversations.

\bibliographystyle{AIMS}
\bibliography{XiaAdaTop2020}

\end{document}